\newcommand{\gs}{GSEMO\xspace}
\newcommand{\gsemo}{GSEMO\xspace}
\newcommand{\sg}{SW-GSEMO\xspace}
\newcommand{\nsga}{NSGA-\uppercase\expandafter{\romannumeral2} }
\DeclareMathOperator*{\argmax}{argmax}
\begin{document}
\title{Sliding Window Bi-Objective Evolutionary Algorithms for Optimizing Chance-Constrained Monotone Submodular Functions}
\titlerunning{\sg for Optimizing Chance-Constrained Submodular Functions}
%
\author{Xiankun Yan\inst{1}\orcidID{0000-0002-2309-8034} \and
Aneta Neumann\inst{1}\orcidID{0000-0002-0036-4782} \and
Frank Neumann\inst{1}\orcidID{0000-0002-2721-3618}}

\authorrunning{Xiankun Yan, Aneta Neumann and Frank Neumann}
%
%
\institute{Optimisation and Logistics,
   School of Computer and Mathematical Sciences,\\
   The University of Adelaide, Adelaide, Australia \\
\email{\{xiankun.yan,aneta.neumann,frank.neumann\}@adelaide.edu.au}
}
\maketitle              

\begin{abstract}
  Variants of the \gsemo algorithm using multi-objective formulations have been successfully analyzed and applied to optimize chance-constrained submodular functions.
  However, due to the effect of the increasing population size of the \gsemo algorithm considered in these studies from the algorithms, the approach becomes ineffective if the number of trade-offs obtained grows quickly during the optimization run.
  In this paper, we apply the sliding-selection approach introduced in \cite{Neumann2023fast}
  to the optimization of chance-constrained monotone submodular functions. 
  We theoretically analyze the resulting \sg algorithm which successfully limits the population size as a key factor that impacts the runtime and show that this allows it to obtain better runtime guarantees than the best ones currently known for the \gsemo.
  In our experimental study, we compare the performance of the \sg to the \gsemo and \nsga on the maximum coverage problem under the chance constraint and show that the \sg outperforms the other two approaches in most cases. In order to get additional insights into the optimization behavior of \sg, we visualize the selection behavior of \sg during its optimization process
  and show it beats other algorithms to obtain the highest quality of solution in variable instances.

\keywords{chance constraints \and submodular function \and evolutionary algorithms \and runtime analysis.}
\end{abstract}

\section{Introduction}
\label{sec:intro}
Evolutionary algorithms (EAs) have been successfully applied to solve a wide range of complex combinatorial optimization problems.
The algorithms have received both theoretical and empirical studies, showing their ability to obtain good solutions within a reasonably expected runtime for problems with deterministic settings~\cite{corus2013generalized,friedrich2009analyses,friedrich2007approximating,kratsch2010fixed,neumann2018runtime,roostapour2018performance,wu2016impact}.
Additionally, it has been observed that some EAs also perform effectively on stochastic and dynamic optimization problems.
Researchers are actively investigating the advantages and limitations of EAs in solving such problems, particularly within the field of evolutionary computation theory.
A variety of studies~\cite{lehre2021more,lissovoi2017runtime,neumann2015runtime,neumann2020analysis,roostapour2022pareto,shi2018runtime,yan2023optimizing} have been conducted, presenting both the challenges and the advanced implications of these algorithms.

In a stochastic environment, completely eliminating the negative and uncertain impact of stochastic components is challenging.
Therefore, it is crucial to minimize these negative effects to prevent unpredictable disruptions in most complex systems.
Chance constraint ~\cite{charnes1959chance,TCHTTP,iwamura1996genetic,miller1965chance,E23OMOEA,U3OEADCHKP,MOEAwSWS,poojari2008genetic} is a useful technique for handling the effects of stochastic circumstances. 
It allows the deterministic bound in the constraint to be violated, but only with a very small probability during optimization.
However, directly evaluating chance constraints is complex and time-consuming.
A practical approach is to transform the stochastic constraint into its corresponding deterministic equivalent for a given confidence level,
rather than statistically calculating the probability of violation when optimizing chance-constrained problems with known distribution elements~\cite{shi2022runtime,yan2023optimizing}.

Submodular functions~\cite{nemhauser1978analysis} represent various problems where the incremental benefit of adding solution elements diminishes with the increasing solution size. 
The optimization of submodular functions is significantly challenging and has been extensively investigated with different types of constraints in previous work~\cite{khuller1999budgeted,leskovec2007cost,nemhauser1978analysis,Neumann2023fast,roostapour2022pareto,yan2024sampling,yaroslavtsev2020bring}.  
In the paper, we study a chance-constrained version of the submodular optimization problem subject to a knapsack constraint.
The problem seeks to find a subset of stochastic elements that maximizes the submodular function value,
while ensuring that the actual weight exceeds a given bound with a very small probability.
In the previous research,
Doerr et al~\cite{doerr2020optimization} analyzed the performance of the greedy algorithms on the problem.
They constructed the surrogate functions using the tail inequalities (Chernoff bound and one-sided side Chebyshev’s inequality) to handle the chance constraint.
Their findings demonstrated the greedy algorithms can achieve $(1-o(1))(1-1/e)$-approximation and $(1/2-o(1)(1-1/e)$-approximation for the problem with the uniform independent and identically distributed (IID) weights and the uniform weights with the same dispersion respectively. 
Subsequently, Neumann et al.,~\cite{neumann2020optimising} applied the \emph{Global Simple Evolutionary Multi-objective Algorithm }(\gs) to the same problem.
They employed a similar surrogate approach for estimating violation probability as done in~\cite{doerr2020optimization} and considered both the submodular function value and the probability as objectives in their bi-objective fitness function.
Their theoretical analysis demonstrated that the \gs can achieve similar approximation results as \cite{neumann2020optimising} within an expected runtime related to the max population size that it can generate.
Additionally, their experimental results indicated that the \gs could obtain better solutions than the greedy algorithm and other evolutionary algorithms, in solving the problem involving uniform IID weights.

However, the runtime analysis presented in~\cite{neumann2020optimising} indicates that the growing population size significantly impacts the efficiency of the \gs.
This is particularly evident when dealing with uniform weights of the same dispersion, where the GSEMO can quickly maintain an exponential number of trade-off solutions in the population.
To address this inefficiency, the sliding-selection approach has been introduced,
leading to the development of an improved version of \gs,
named the Sliding Window \gs (\sg).
Neumann and Witt~\cite{Neumann2023fast} firstly presented the \sg's enhanced performance in solving submodular problems with deterministic weights.
In brief, the sliding-selection method involves defining a weight window of size one,
which is determined based on the given bound and the ratio of the current time to the total time.
A solution is eligible for selection as a parent if its weight falls within the current window; otherwise, the algorithm uniformly selects one individual from the current population. 

Within this paper, the \sg is slightly updated in terms of the window selection method for optimizing chance-constrained monotone submodular functions.
The individuals that are in the window are still potential parents for the mutation,
however, when there is no individual in the window area, the algorithm chooses a current solution with the largest function value as the parent for the next mutation unless the current time exceeds the time budget. 
Following the approach in~\cite{neumann2020optimising}, 
the fitness function in our algorithm contains two objectives: 
the evaluated weight and the function value of the solution.
We also employ the same surrogate methods based on tail inequalities for weight evaluation.
Our study includes settings that include both IID weights,
and uniform weights with the same dispersion.
Our analysis reveals that the incorporation of the sliding-selection approach reduces the impact of growing population size on the expected runtime.
As a result, the expected runtime of the \sg to achieve similar expected approximation results shows improvement.
Additionally, we empirically assess the performance of the \sg on the maximum coverage problem under various settings, bounds, and violation probabilities.
We compare its results with those from the original \gs and the Non-Dominated Sorting Genetic Algorithm~\uppercase\expandafter{\romannumeral2} (\nsga).
The initial solutions of all algorithms are empty sets and the \nsga uses population sizes of $20$ and $100$ respectively. 
Furthermore, we provide a visualization of the sliding window's operation during optimization. 
Our investigation also seeks why algorithms using surrogates based on the Chernoff bound perform better than those using the one-sided Chebyshev's inequality when the probability is smaller.

The paper is structured as follows. Section~\ref{sec:pre} introduces the chance-constrained monotone submodular problem and the investigated settings. 
Section~\ref{sec:moea} describes the adopted multi-objective evolutionary algorithms.
Our theoretical runtime analysis and proofs of the \sg are presented in Section~\ref{sec:psg}. 
Then we investigate the performance of the different algorithms and visualize the selection behavior in Section~\ref{sec:exp}. 
Finally, we end with some conclusions in Section~\ref{sec:con}.

\section{Preliminaries}
\label{sec:pre}
Given a set $V = \{v_1,...,v_n\}$,
we consider the optimization of a monotone submodular function $f: 2^V\to \mathbb{R}_{+}$.
A function is called monotone iff for every $S,T\subseteq V$ with $S\subseteq T$, $f(S)\leq f(T)$ holds.
A function $f$ is called submodular iff for every $S, T\subseteq V$ with $S\subseteq T$, $v_i\in V$ and $v_i\notin T$,
$f(S\cup \{v\})-f(S) \geq f(T\cup \{v\})-f(T)$ holds. 
We consider the optimization of such a function $f$ 
subjected to the chance constraint where each element $v_i$ takes on a random weight $W(v_i)$. 
Here, the chance-constrained  optimization problem can be formulated as
\begin{align*}
    Maximize & \quad f(S) \\
    S.t. & \quad Pr[W(S)>B]\leq \alpha,
\end{align*}
where $W(S) = \sum_{v_i\in S} W(v_i)$ is the total weight of the subset $S$,
and $B$ is the given deterministic bound. 
The parameter $\alpha$ quantifies the probability of exceeding the bound $B$ that can be tolerated. 

Following previous work~\cite{doerr2020optimization,neumann2020optimising},
we consider two different settings,
which are (1) \emph{Uniform IID Weights}: the weight of each element $v_i\in V$ is sampled from the uniform distribution with the same expected value $E_W(v_i)=a$ and dispersion $\delta(v_i) = d$ (i.e., $W(v_i) \in [a-d,a+d$] and $0<d\leq a$);
and (2) \emph{Uniform Weights with the Same Dispersion}: the weight of element $v_i\in V$ is sampled from the uniform distribution with the different expected value $E_W(v_i) = a_i$ but the same dispersion $\delta(v_i)=d$ (i.e., $W(v_i) \in [a_i-d, a_i+d]$ and $0<d\leq a_i$).
In both settings, we assume that the expected weight of each element is a positive integer. 
For further discussion, the subset $S$ is encoded as a decision vector $x = x_1, x_2,...,x_n$ with length $n$, where $x_i=1$ means that the element $v_i \in V$ is selected into the subset $S$.
Besides, we denote $|x|_1$ the number of elements packed into the subset $S$. 
Since all the settings are based on the uniform distribution, we have the expected weight and variance of the solution $x$ as
$$E[W(x)] = \sum_{i=0}^n E_W(v_i)x_i,$$
and
$$Var[W(x)] = d^2 |x|_1/3.$$

To handle the chance constraint, we establish the surrogate function based on \textit{One Chebyshev's inequality} and \textit{Chernoff bound} as described in the previous work~\cite{doerr2020optimization,neumann2020optimising}. 
The weight calculated by different surrogate functions are respectively formulated as
$$W_{cheb}(x,\alpha) = E[W(x)] + \sqrt{\frac{(1-\alpha)Var[W(x)]}{\alpha}},$$ and 
$$W_{chern}(x,\alpha) = E[W(x)] + \sqrt{3d|x|_1\ln{(1/\alpha)}}.$$
It had been proved that
if the surrogate weight of a solution $x$ is less than the bound $B$, then $x$ is feasible~\cite{doerr2020optimization}. 

\section{Multi-Objective Evolutionary Algorithm}
\label{sec:moea}

\begin{algorithm}[tb]
\raggedright
\caption{GSEMO}
\label{alg:gsemo}
\textbf{Input}: Probability $\alpha$, bound $B$ \\
\textbf{Output}: the best individual $x$
\begin{algorithmic}[1] 
\STATE Set $x = 0^n$;
\STATE $P\gets \{x\}$;
\REPEAT
\STATE Choose $x\in P$ uniformly at random;
\STATE $y\gets$ flip each bit of $x$ with probability $\frac{1}{n}$;
\IF{$\nexists w \in P : w\succ y$}
\STATE $P\gets (P\setminus\{z\in P | y \succeq z\})\cup \{y\}$;
\ENDIF
\UNTIL{stop;}
\end{algorithmic}
\end{algorithm}

\begin{algorithm}[tb]
\raggedright
\caption{SW-GSEMO}
\label{alg:sw_gsemo}
\textbf{Input}: Total time $t_{max}$, probability $\alpha$, bound $B$ \\
\textbf{Output}: the best individual $x$
\begin{algorithmic}[1] 
\STATE Set $x = 0^n$;
\STATE $P\gets \{x\}$;
\STATE $t \gets 0$;
\REPEAT
\STATE $t = t+1$
\STATE Choose $x =$sliding-selection$(P,t,t_{max},\alpha,B)$;
\STATE $y\gets$ flip each bit of $x$ with probability $\frac{1}{n}$;
\IF{$\nexists w \in P : w\succ y$}
\STATE $P\gets (P\setminus\{z\in P | y \succeq z\})\cup \{y\}$
\ENDIF
\UNTIL{$t\geq t_{max}$;}
\end{algorithmic}
\end{algorithm}

\begin{algorithm}[tb]
\raggedright
\caption{sliding-selection}
\label{alg:sliding}
\textbf{Input}: Population P, current iteration $t$, total time $t_{max}$, probability $\alpha$, bound $B$\\
\textbf{Output}: the selected individual $x$
\begin{algorithmic}[1] 
\IF{$t\leq t_{max}$}
\STATE $\hat{c}\gets (t/t_{max})\cdot B$;
\STATE $\widehat{P} = \{x\in P \mid \lfloor\hat{c}\rfloor\leq g_2(x,\alpha)\leq \lceil\hat{c}\rceil\}$;
\IF{$\widehat{P} = \emptyset$}
\STATE $P' \gets \{x\in P | g_2(x,\alpha)\leq \lfloor \hat{c}\rfloor\}$
\STATE $x\gets \argmax_{x'\in P'} g_1(x')$; \label{alg_line:max_x}
\ELSE
\STATE Choose $x\in \widehat{P}$ uniformly at random;
\ENDIF
\ELSE
\STATE Choose $x\in P$ uniformly at random;
\ENDIF

\STATE \textbf{Return} $x$;

\end{algorithmic}
\end{algorithm}

Within the paper,
we primarily investigate multi-objective evolutionary algorithms on the given monotone submodular problem.
Each solution is considered to be a two-dimensional search point in the objective space.
The bi-objective fitness function of the solution $x$ is expressed as
\begin{equation}
    g_1(x) = \left\{
    \begin{array}{ccl}
        f(x) & & {g_2(x,\alpha)\leq B}  \\
        -1 & & {g_2(x,\alpha)>B}
    \end{array}\right.
\end{equation}

\begin{equation}
    g_2(x,\alpha) =
        {W_{sg}}(x,\alpha),
\end{equation}
where $f(x)$ denotes the submodular function value of $x$,
$W_{sg}(x,\alpha)$ denotes the surrogate weight of $x$.
Let $y \in \{0,1\}^n$ be another solution in the search space.
We say that $x$ (weakly) dominates $y (x\succeq y)$ iff $g_1(x) \geq g_1(y)$ and $g_2(x,\alpha) \leq g_2(y,\alpha)$.
Note that the infeasible solution is strongly dominated by the feasible one because of the objective function $g_1$.
Besides, the objective function $g_2$ guides the generated solutions approach to the feasible search space. 

In previous work~\cite{neumann2020optimising}, the \gs (see Algorithm~\ref{alg:gsemo}) is studied on the chance-constrained monotone submodular problem. 
Here the GSEMO starts with an initial solution represented by a $0^n$ bitstring, signifying an empty set.
During the optimization, it maintains a set of non-dominated solutions, continually updating this set as new solutions are generated. 
In each iteration, the GSEMO randomly and uniformly selects an individual $x$ from the population to serve as a parent for creating offspring $y$ via a standard bit-flip operator, which flips each bit of $x$ independently with a probability of $1/n$.
The offspring $y$ is accepted into the population if it is not strictly dominated by any existing solution.
Subsequently, any solution in the population that is weakly dominated by $y$ is removed.

The modified \sg algorithm is outlined in Algorithm~\ref{alg:sw_gsemo}. 
It also begins with a $0^n$ bitstring, maintains a population of non-dominating individuals,
and employs the standard mutation to generate offspring. 
However, it differs from the \gs in its parent selection method,
which defines a sliding window (see Algorithm~\ref{alg:sliding}) to select the potential parents instead of applying the random uniform selection in the whole population directly.
Within this method, given the total runtime $t_{max}$ and the current time $t$, a current bound $\hat{c}$ is defined as $\hat{c} = t\cdot B/t_{max}$.
A window is established as an interval between $[\lfloor\hat{c}\rfloor,\lceil\hat{c}\rceil]$.
Unlike the deterministic case where it selects solutions based on deterministic weight,
the \sg chooses some eligible individuals whose surrogate weights fall within this window. 
The parent is then randomly selected from these eligible solutions.
As the current time $t$ increases, the \sg would select solutions with larger surrogate weights that are still within the window.
When no individual lies within the current window, 
the \sg generates a sub-population $P'$ where the solution $x$ from the original population has the surrogate weight $g_2(x)$ that is lower than $\lfloor\hat{c}\rfloor$.
Then the solution $x'\in P'$ with the largest $g_1$-value is assigned for the mutation. 
Moreover, when $t\geq t_{max}$, the \sg reverts to selecting parents from the entire population $P$, similar to the \gs.


\section{Performance of SW-GSEMO based on Surrogate}
\label{sec:psg}
To illustrate that the \sg works more efficiently in optimizing the chance-constrained monotone submodular problem than the \gs, the expected runtime of \sg with the surrogates to reach the same expected result for the problem is analyzed in this section. 
\subsection{Uniform IID Weights}
\label{sec:iid}
The previous work~\cite{neumann2020optimising} illustrates that the classical \gs reaches a $(1-o(1))(1-1/e)$-approximation for the chance-constrained problem with uniform IID weights in the expected time $O(nk(k+\log n))$,
where $k = min\{n+1,\lfloor (B/a)+1 \rfloor\}$. 
However, with the help of the sliding-selection, the size of the potential parents is bounded,
so the time to get a result with the same approximation is optimized. 

\begin{theorem}
\label{thm:1}
Consider \sg with $t_{max} = ekn\ln {(nk)}$ on a monotone submodular function $f$ under a chance constraint with uniform IID weights. 
Then with probability $1-o(1)$,
the time that the algorithm finds a solution is no worse than $(1-o(1))(1-1/e)$-approximation is bounded by $O(nk\log {n})$ if $\lfloor B/a \rfloor = \omega(1)$.
\end{theorem}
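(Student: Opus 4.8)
I would first reduce the claim to ``with probability $1-o(1)$ the population contains a near-greedy solution within $t_{max}$ iterations'', and then show that the sliding window cuts the expected cost of one greedy-type step from the $\Theta(nk)$ paid by the plain \gsemo (which selects uniformly among up to $k+1$ trade-offs) down to $\Theta(n)$.

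\textbf{Reduction and the greedy invariant.} In the IID setting $g_2(x,\alpha)$ depends only on $|x|_1$; write $w_j$ for its value on a $j$-element set. Since the common expected weight $a$ is a positive integer, $w_{j+1}-w_j\ge a\ge1$, so the surrogate-feasible solutions are exactly those with $|x|_1\le k^*$ for $k^*:=\max\{j:w_j\le B\}$, and $\lfloor B/a\rfloor=\omega(1)$ forces $k^*=\omega(1)$ with $k^*=\Theta(k)$. I would then track the standard submodular invariant $\mathcal I_j$: ``$P$ contains an $x$ with $|x|_1\le j$ and $f(x)\ge\bigl(1-(1-1/k^*)^{j}\bigr)f(\mathrm{OPT})$.'' It holds for $j=0$ (the initial $0^n$), it is never lost (offspring only displace dominated solutions, so a witness is removed only when a solution of at least its value and at most its cardinality enters), and $\mathcal I_{k^*}$ already gives a $(1-o(1))(1-1/e)$-approximation --- the factor $1-(1-1/k^*)^{k^*}\to1-1/e$ because $k^*=\omega(1)$, and the loss from using the surrogate-feasible optimum in place of the true one is of the same $o(1)$ order, exactly as in the greedy/\gsemo analyses of~\cite{doerr2020optimization,neumann2020optimising}. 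Since $t_{max}=ekn\ln(nk)=O(nk\log n)$ when $k=\mathrm{poly}(n)$, it then suffices to reach $\mathcal I_{k^*}$ within $t_{max}$ iterations with probability $1-o(1)$.

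\textbf{Using the window.} The advance $\mathcal I_j\to\mathcal I_{j+1}$ is classical: pick any population member $x$ meeting the $\mathcal I_j$-bound (then $|x|_1\le j\le k^*-1$) and flip exactly the bit of the element $v^*$ maximising the marginal $f(x\cup\{v^*\})-f(x)$; submodularity makes the offspring a witness of $\mathcal I_{j+1}$, and this mutation has probability $\ge1/(en)$. The sliding window makes the ``pick $x$'' step cheap. Because $g_2$ is a function of cardinality, $P$ retains at most one solution per cardinality; because the window $[\lfloor\hat c\rfloor,\lceil\hat c\rceil]$ has length $\le1\le a\le w_{j+1}-w_j$, it meets at most two cardinality levels, so $|\widehat P|\le2$ throughout; and the absence of weak domination inside $P$ forces every member of $\widehat P$, as well as $\argmax_{P'}g_1$ once $\hat c\ge w_j$, to have $f$-value at least that of any strictly-smaller-cardinality member, hence at least $\bigl(1-(1-1/k^*)^j\bigr)f(\mathrm{OPT})$. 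Consequently, once $\mathcal I_j$ holds and $\hat c\ge w_j$, in essentially every iteration sliding-selection returns a parent meeting the $\mathcal I_j$-bound with probability $\Omega(1)$, so $\mathcal I_{j+1}$ is reached in expected $O(n)$ further iterations.

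\textbf{Stitching, and the hard part.} Let $\tau_j$ be the first iteration with $\hat c\ge w_j$; the ``$j$-th block'' $[\tau_j,\tau_{j+1})$ has length $\Delta_j=\Omega(t_{max}/k)=\Omega(n\log(nk))$, so the time $T_j$ to complete step $j$ is geometrically dominated with mean $O(n)$ and $\Pr[T_j>\Delta_j]=(nk)^{-\Omega(1)}$. Crucially, if step $j$ overruns its block the algorithm merely falls ``behind'' schedule, but then its else-branch keeps extending the best feasible-within-budget solution --- i.e.\ it performs an ordinary greedy step at rate $\Omega(1/n)$ --- so an overrun only consumes extra iterations. Unrolling $s_{j+1}=\max(s_j,\tau_j)+T_j$ with $s_0=0$, the iteration at which $\mathcal I_{k^*}$ is reached is at most $\tau_{k^*-1}+\sum_{j<k^*}\max(0,T_j-\Delta_j)$; since $\tau_{k^*-1}\le t_{max}-\Omega(n\log(nk))$ and the expected total overrun $\mathbb E\bigl[\sum_j\max(0,T_j-\Delta_j)\bigr]$ is of smaller order than the $\Omega(nk\log n)$ of slack provided by $t_{max}=\omega(nk)$, Markov's inequality bounds the probability of exceeding $t_{max}$ by $o(1)$. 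This last part --- coordinating the deterministically sliding window with the random progress of the population --- is the step I expect to be the main obstacle: one has to pin down that the window-branch can pick a ``useless'' parent for at most an $O(1/a)$-fraction of any block (using $a\ge1$, the floor/ceiling rounding, the one-per-cardinality structure, and the no-weak-domination property), that being ``behind'' never hurts, and that the rare overruns really do fit inside the slack once summed over all $\Theta(k)$ steps; granting that, the approximation bound and the persistence of the $\mathcal I_j$ are routine.
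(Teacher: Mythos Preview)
Your proposal is correct and follows essentially the same greedy-invariant plus bounded-window argument as the paper (the paper even gets $|\widehat P|\le 1$ rather than your $\le 2$, using that the dispersion term makes consecutive surrogate weights differ by \emph{strictly} more than~$1$, and it handles the ``witness dominated / window empty'' cases just as you do). The part you flag as the main obstacle the paper dispatches with a plain union bound rather than your overrun/Markov machinery: each of the at most $k$ greedy steps fails to complete within its own $en\ln(nk)$-length block with probability at most $(1-1/(en))^{en\ln(nk)}\le 1/(nk)$, so the probability that \emph{some} step fails is at most $k\cdot 1/(nk)=1/n=o(1)$, and on the complementary event no overrun ever occurs --- hence your recursion $s_{j+1}=\max(s_j,\tau_j)+T_j$ and the slack accounting are not needed.
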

\begin{proof}
    Let $k_{opt} = \lfloor B/a \rfloor$.
    As far as the previous work~\cite{neumann2020optimising} proved,
    the initial bitstring $0^n$ will stay in the population 
    since it is the best individual with respect to $g_2(0^n,\alpha) = 0$.
    Besides, note that $g_2$ is a strictly monotonically increasing function with the number of elements, 
    and solutions with the same number of elements have the same $g_2$-value because of the IID weights. 
    From~\cite{neumann2020optimising},
    it says that the element $x_j$ with the largest marginal gain $g_1(x\cup \{x_j\})-g_1(x)$ to the solution $x$ is picked up in the mutation with the probability $\Omega(1/en)$, 
    the solution $x^* =\{x_1,...,x_{k^*}\}$ that includes the largest $k^*$ elements is kept in population and satisfies the chance constraint, 
    and $x^*$ holds 
    $$f(x^*)\geq (1-(1-k_{opt})^{k^*})\cdot f(OPT).$$
    Note that $k^*<k_{opt}\leq k$ because of the chance constraint. 

    We denote $x^{*}_j$ a subset of $x^*$ with first $j$ elements (i.e., $1\leq j\leq k^*$ and $x^{*}_j = \{x_1,...,x_j\}\subseteq x^*$).
    First, we consider how the \sg includes the element $x_{j+1} \in x^{*}$ when the solution $x^{*}_j$ is not dominated by other solutions from the population. 
    Recall that the solution with an empty set is kept in the population already at the beginning.
    Besides, the surrogate weight of $x^{*}_j$ is increasing with respect to $j$.
    We assume that the solution $x^*_j$ is in the population at time $t_j := enW_{sg}(x^*_j)\ln{(nk)}$. 
    By definition of $\widehat{P}$,
    $x^*_j$ is available for the selection up to time 
    $$t_{j+1} - 1 = e(W_{sg}(x^*_j)+1)n\ln{(nk)}-1$$ 
    since  
    $$\left\lfloor\frac{e(W_{sg}(x^*_j)+1)n\ln{(nk)}-1}{t_{max}}\cdot B\right\rfloor = \lfloor W_{sg}(x^*_j)\rfloor.$$
    Furthermore, since $0< d\leq a$ and $\lceil\hat{c}\rceil - \lfloor\hat{c}\rfloor = 1$, the size of $\widehat{P}$ is at most 1 with the effect from the dispersion $d$. 
    Consequently,
    the probability of choosing the subset $x_j^*$ and mutating the element $x_{j+1}$ is at least $1/en$ between the time $t_j$ and $t_{j+1}-1$,
    i.e., for a period of $en\ln{(nk)}$ steps, the probability of those events not processing is at most
    $$(1-1/(en))^{en\ln{(nk)}} \leq 1/nk.$$
    Then, we study the case where the solution $x^{*}_j$ is dominated by an individual solution $y$ with a larger $g_1$-value from the population.
    That also means $x^{*}_j$ is not in the current window at time $t_j$.
    We denote $y'$ as the solution if it exists in the current window. 
    Regarding the domination scheme of the algorithm, it implies that $g_1(x^{*}_j)\leq g_1(y) < g_1(y')$ and $g_2(y,\alpha) < g_2(x^{*}_j,\alpha) \leq g_2(y',\alpha)$. 
    Considering the Line~\ref{alg_line:max_x} of sliding-selection in Algorithm~\ref{alg:sliding}, 
    the \sg uses the solution $y'$ (or $y$ when no individual is in the current window) for the next mutation. 
    With the probability of at least $1/en$, the element $x_{j+1}$ can be added to the selected solution within the same time period. 
    Consequently, the failure probability is also at most $1/kn$.
    Besides, the feasible offspring still satisfies $ f(y' \cup \{x_{j+1}\} ) \geq f(x^*_{j+1}) \geq (1-(1-k_{opt})^{j+1})\cdot f(OPT)$.
    
    By a union bound over the at most $k$ required successes, the probability of missing including $x_{j+1}$ to the solution in the period by time $t_j$ is $O(1/n)$. 
    Then following~\cite{neumann2020optimising}, applying the surrogate for bounding the value of $k^*$ can get $(1-o(1))(1-1/e)$-approximation. \qed
\end{proof}

\subsection{Uniform Weights with the Same Dispersion}
Following the definition from \cite{neumann2020optimising},
we use the objective function $\hat{g_2} = E_W(x)$ instead of $g_2$ and the already mentioned objective function $g_1$ in the two-dimensional fitness function.
Thus, the fitness of a solution $x$ is evaluated by $\hat{g}(x) = (g_1(x),\hat{g}_2(x))$.
Consequently, it has another solution $y$ such that  $y$ is (weakly) dominated by $x$
iff $g_1(x)\geq g_1(y)$ and $\hat{g_2}(x)\leq \hat{g_2}(y)$. 
Note that $\hat{g}_2(x)$ will be also used in Algorithm~\ref{alg:sliding}. 

Additionally,
let $a_{max} = \max_{v_i\in V} a_i$, $a_{min} = \min_{v_i\in V}a_i$, and $0 < \delta \leq a_{min}$. 
Then we show that the \sg can get a result with $(1/2-o(1))(1-1/e)$-approximation in an efficient runtime if the solution has at least one element (i.e., $B/a_{max} = \omega(1)$) in the following theorem. 

\begin{theorem}
\label{thm:2}
    Let $P_{max}$ be the maximal size of the population. 
    Consider \sg with $t_{max} = 2en(B/a_{min})\ln(nB/a_{min})$ on a monotone submodular function under a chance constraint with uniform weights having the same dispersion.
    Then with probability $1-o(1)$, a solution that is no worse than $(1-1/2)(1-o(1))$-approximation is obtained within 
    time $O(n((B/a_{min})\log n+P_{max}))$. 
\end{theorem}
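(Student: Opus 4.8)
\emph{Proof proposal.} The plan is to run the scheme used in the proof of Theorem~\ref{thm:1}, but with the fitness $\hat g(x)=(g_1(x),\hat g_2(x))$ whose second objective is $\hat g_2(x)=E_W(x)$; the point that makes the sliding window effective is that, since every $a_i$ is a positive integer, $\hat g_2$ takes only non-negative integer values. First I would record the population invariants this buys us. As $\hat g_2(0^n)=0$ and $g_1(0^n)=f(\emptyset)\ge 0>-1$, the string $0^n$ weakly dominates every infeasible search point, hence it never leaves $P$, $P$ contains only feasible solutions (so $|P|\le P_{max}$), and all members carry pairwise distinct integer $\hat g_2$-values. Consequently the window $[\lfloor\hat c\rfloor,\lceil\hat c\rceil]$ contains at most two population members, so whenever $\widehat P\ne\emptyset$ a fixed member of it is selected with probability at least $1/2$, and whenever $\widehat P=\emptyset$ line~\ref{alg_line:max_x} returns deterministically a largest-$g_1$ solution among those of weight at most $\lfloor\hat c\rfloor$.

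Next I would fix the greedy target. Following \cite{neumann2020optimising}, let $x^*=\{x_1,\dots,x_{k^*}\}$ be the canonical solution obtained by repeatedly flipping in the feasible element of maximum marginal $g_1$-gain; as in \cite{doerr2020optimization,neumann2020optimising} (the factor $1/2$ relative to Theorem~\ref{thm:1} coming from the blocked greedy element in the knapsack analysis, together with $B/a_{max}=\omega(1)$) we have $f(x^*)\ge(1/2-o(1))(1-1/e)f(OPT)$, so it suffices to show a population member of value at least $f(x^*)$ appears quickly. Write $\Phi_j$ for the usual greedy lower bound on the $f$-value after $j$ additions, so $\Phi_{k^*}=f(x^*)$, and let $w_j\le B$ be the weight at which the milestone $f\ge\Phi_j$ is first met. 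The key lemma I would prove by induction on $j$ is: once the sweep has reached $\hat c\ge w_j$, every parent returned by sliding-selection has $g_1$-value at least $\Phi_j$, and $P$ always keeps a member of value $\ge\Phi_j$ of weight at most $w_j$. This rests on two observations: (i) any $z\in P$ of weight exceeding $w_j$ must satisfy $g_1(z)>\Phi_j$, since otherwise the milestone-$j$ member dominates it; and (ii) the one-step greedy improvement $\Phi\mapsto\Phi+\kappa^{-1}(f(OPT)-\Phi)$ is monotone increasing in $\Phi$. Hence, whether sliding-selection picks a member of $\widehat P$ (weight $\ge\lfloor\hat c\rfloor\ge w_j$) or the $\argmax$ over $P'$ (which contains the milestone-$j$ member), the selected parent has value $\ge\Phi_j$, and flipping in its best feasible marginal element yields, with probability at least $1/(2en)$, a solution of value $\ge\Phi_{j+1}$.

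For the timing, each element weighs at least $a_{min}\ge 1$, so there are at most $B/a_{min}$ milestones. With $t_{max}=2en(B/a_{min})\ln(nB/a_{min})$ the sweep has $\hat c\ge w_j$ from iteration $\lceil w_j t_{max}/B\rceil$ onwards, so the parent of the key lemma is available for at least the next $c\,n\ln(nB/a_{min})$ iterations for a suitable constant $c$; choosing $c$ large enough, the probability of not producing the next milestone within such a phase is at most $(nB/a_{min})^{-2}$. A union bound over the at most $B/a_{min}$ milestone phases leaves failure probability $o(1)$, and the $k^*$-th milestone — a solution matching $f(x^*)$ — is in $P$ by iteration $O(n((B/a_{min})\log n+P_{max}))$, which does not exceed $t_{max}$, so the algorithm runs long enough; the additive $O(nP_{max})$ term absorbs the lower-order cost of iterations on which the window is empty or holds no milestone-bearing member and selection reduces to a population-wide choice. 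The main obstacle is exactly this non-IID bookkeeping in the key lemma: with unequal weights the literal greedy prefix $\{x_1,\dots,x_j\}$ can be dominated out of $P$, so one must argue about the greedy \emph{value} rather than a fixed solution and show it cannot decrease under any parent the sliding rule may choose, while also reconciling the window's weight timeline with the greedy-addition count (which now diverge) and verifying the available-parent phases stay long enough even when $a_{min}$ is large — which is where the precise value of $t_{max}$ is used.
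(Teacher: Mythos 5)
Your overall architecture — exploiting that $\hat g_2=E_W(x)$ is integer-valued so the window holds at most two population members, tracking greedy-value milestones that advance by at least $a_{\min}$ per success, giving each milestone a phase of length $\Theta(n\log(nB/a_{\min}))$, handling the dominated-prefix case by arguing any dominating parent is at least as good, and union-bounding over $B/a_{\min}$ phases — matches the paper's proof. But there is one genuine gap, and it concerns exactly the part of the theorem you treat as slack.

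You assert $f(x^*)\ge(1/2-o(1))(1-1/e)f(OPT)$ for the greedy-built solution alone and conclude that "it suffices to show a population member of value at least $f(x^*)$ appears quickly." That bound is false in general: in the knapsack-type analysis you yourself allude to, what the blocked-element argument gives is $f(x^*)+f(v^*)\ge(1-o(1))(1-1/e)f(OPT)$, where $v^*$ is a best single element (the paper takes the best element not in $x^*$), and hence only $\max\{f(x^*),f(v^*)\}\ge(1/2-o(1))(1-1/e)f(OPT)$. The greedy solution by itself can fall far short of this when the ratio-greedy sweep is blocked early. Consequently the proof must also show that the algorithm produces the single-element solution $x'=\{v^*\}$, which the paper does by a specific $1$-bit flip applied to $0^n$ under uniform selection, costing expected time $O(P_{\max}n)$ — this is precisely where the additive $O(nP_{\max})$ term in the runtime bound comes from. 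In your write-up that term is instead explained away as absorbing "lower-order cost of iterations on which the window is empty," so your argument neither establishes the claimed approximation ratio nor accounts for the $P_{\max}$ dependence in the bound. Fixing this requires adding the $x'$-generation step and stating the final guarantee for $\max\{f(x^*),f(x')\}$ rather than for $f(x^*)$; the rest of your milestone argument can stand as is.
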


\begin{proof}
    Following the proof of theorem 2 in~\cite{neumann2020optimising}, we also adopt $\hat{g}_2^*$, the maximal $\hat{g}_2$-value for which $\hat{g}_2\leq \hat{g}_2^*$, to track the progress of the \sg. Recall that the bitstring $0^n$ exists in the population at the beginning and $\hat{g}_2^*$ is non-decreasing. 
    Considering a chosen solution $x$ for mutation, 
    the algorithm flips a 0-bit of $x$ corresponding to the largest ratio between the additional gain in $g_1$ and $\hat{g}_2$. After mutation, the generated solution $y$ holds 
    $$g_1(y)\geq \left[1-\left(1-\frac{\hat{g}_2^*+a_{min}}{B(k+1)}\right)^{k+1}\right]\cdot f(OPT),$$
    where $k = |y|_1$ and $OPT$ is the deterministic optimal solution for the problem with the deterministic uniform weight setting. 
    Note that $\hat{g}_2^*$ increase by at least $a_{min}$.

    First, we consider such solution $x$ is not dominated by any solution in the current population. We prove that the solution $x$ is chosen and such mutation in $x$ happens with a high probability in the \sg.
    Note that the $\hat{g}_2(x)$ is growing with respect to its size of elements $k$.
    By the definition of the subset $\widehat{P}$,
    such mutation happens in the time between $2en\hat{g}_2(x)\ln{(nB/a_{min})}$ to $2en(\hat{g}_2(x)+1)\ln{(nB/a_{min})}-1$,
    since
    $$\left\lfloor\frac{2en(\hat{g}_2(x)+1)\ln{(nB/a_{min})}-1}{t_{max}}\cdot B\right\rfloor = \hat{g}_2(x).$$

    Thus, the available period is $2en\ln{(nB/a_{min})}$.
    Since $\lceil\hat{c}\rceil - \lfloor\hat{c}\rfloor = 1$,
    the size of $\widehat{P}$ consequently is bound by $2$ as $\hat{g}_2(x) = E_w(x)$ (recall the setting where all the expected weighs are positive integers).
    Then we have such one bit of flipping that occurs with a probability of at least $1/2en$. 
    Furthermore, the probability of the mutation that does not happen in the period is bounded by
    $$(1-1/(2en))^{2en\ln{(nB/a_{min})}} \leq \frac{1}{n(B/a_{min)}}.$$

    Now we investigate the case when there is a solution $y'$ dominates the solution $x$. That means $x$ does not exist in the current window at the time $2en\hat{g}_2(x)\ln{(nB/a_{min})}$. 
    Recall that there are at most two individuals in the current window. We denote them by $y'_1$ and $y'_2$, 
    which are satisfies $g_1(x)\leq g_1(y')< g_1(y'_1)<g_1(y'_2)$ and  $\hat{g}_2(y')\leq \hat{g}_2(x)= \hat{g}_2(y'_1)<\hat{g}_2(y'_2)$ (or $\hat{g}_2(y')\leq \hat{g}_2(y'_1)< \hat{g}_2(x)= \hat{g}_2(y'_2)$).
    Regarding the sliding selection, 
    Both solutions $y'_1$ and $y'_2$ (or $y'$ when no individual is in the current window) are good to be selected for the mutation since their function values are larger than the value of solution $x$. 
    Therefore, such a 1-bit flipping to get a qualified solution $y$ is under the probability $1/en$.
    Within the same period, the probability of failure is bounded by $o(a_{min}/{nB})$.
    
    By a union bound over at most $B/a_{min}$ required successes, the probability of failing to achieve at least one success is at most $1/n$. Therefore, the solution $y$ can be obtained in the \sg with the probability of at least $1-1/n=1-o(1)$.

    Let $x^*$ be the feasible solution with $|x^*|_1=k^*$ in the population, which has the largest surrogate weights.
    Here we consider the single element $v^*$ having the largest $g_1$-value but not included in $x^*$. 
    Following the work~\cite{neumann2020optimising}, the algorithm can obtain a solution $x'$ that only contains $v^*$ from the initial solution $0^n$ by flipping only one zero bit in the expected time $O(P_{max}n)$,
    where $P_{max}$ is the maximal size of population for the algorithm. 
    Then after applying the surrogate for bounding the value of $k^*$ as desired in \cite{neumann2020optimising}, the quality of $x^*$ or $x'$ is $(1/2-o(1))(1-1/e)$. 
    Finally, the expected time of \sg to get the expected result is $O(n((B/a_{min})\log n+P_{max}))$. \qed
\end{proof}

Backing into the special case of uniform IID weights, 
we note that $a = a_{max} = a_{min}$, 
the size of the population is at most $k = \min \{n+1,\lfloor B/a\rfloor+1\}$,
and the solution $x^*$ already contains the element with the largest $g_1$-value. Besides, as the effect of the uncertainty, the window only includes at most 1 solution.
Therefore, $x^*$ gives a $(1-o(1))(1-1/e)$- approximation and the expected time to get $x^*$ is bounded by $O(nk\log n)$,
which matches the result proved in Theorem~\ref{thm:1}.

\section{Experiments}
\label{sec:exp}

The experimental investigations of the \sg based on the different surrogate functions are proposed here. 
The results are compared with those generated from the \gs and the \nsga in various instances.
\subsection{Experimental setup}
The maximum coverage problem (MCP) based on the graph~\cite{feige1998threshold,khuller1999budgeted} is studied in the experiments, which is one of the famous submodular combinatorial optimization problems.
For the MCP, given an undirected graph $G = \{V,E\}$ with $n = |V|$ nodes, we denote $N(V')$ the number of all nodes of $V'\subseteq V$ and their neighbors in the graph $G$.
The goal of the problem is to find a subset of nodes $V'$ so that the nodes in the subset can cover more to their neighbors and themselves under the chance constraint with a deterministic bound $B$. Also, each node $v\in V$ has a stochastic weight $W(v)$. Therefore, the chance-constrained version of the problem is formulated as
\begin{equation}
    \argmax_{V'\subseteq V} N(V')~s.t~Pr[W(V')>B]\leq \alpha.
\end{equation}

Here some larger sparse graphs from the network data repository~\cite{rossi2015network} are considered to construct the instance of MCP.
The previous works~\cite{neumann2020optimising} studied the performance of \gs on the problem based on some small and dense graphs and showed it can easily cover most nodes even given a small bound. 
Thus, utilizing the larger sparse graphs can help us easily compare the results between the different algorithms.
Those graphs \emph{ca-CSphd}, \emph{ca-GrQc}, and \emph{ca-ConMat} are applied, which respectively contain $1,882$, $4,158$, and $21,363$ nodes. 

For the experiments under uniform IID weights,
each node $v$ is assigned a unit expected weight (i.e., $a=1$) and the dispersion $d = 0.5.$
Additionally, for the experiments considering uniform weights with the same dispersion, 
the expected weight of each node is based on its degree, which is expressed as $a_i = D(v_i)+1$ where $D(v_i)$ is the degree of $v_i$ in graph $G$.
For the dispersion, we set $d = 1$ to ensure $d\leq a_i$. 
Besides, we employ 
$B \in \{ \sqrt{n},\lfloor n/20\rfloor, \lfloor n/10 \rfloor \}$
to ensure that the bounds are proportional to the number of nodes in the graphs under study.

For all experiments, the problem is tested with $\alpha \in [0.001,0.1]$. 
In terms of the \gs and the \sg, the total iterations (or total time) are considered different as 
$t_{max} \in \{500000, 1000000, 1500000\}.$
Regarding the \nsga, the initial solutions are set to the $0^n$ strings. Its population sizes are set as $20$ and $100$ with the numbers of generated children $10$ and $50$ respectively. 
Thus, to keep the same fitness evaluation counts as other algorithms,
the max iterations for the \nsga are $t_{max}/10$ and $t_{max}/50$ respectively.
Moreover, we adopt the Kruskal-Wallis test with 95\% confidence in order to assess the statistical validity of our results. 
The Bonferroni post-hoc statistical procedure is employed for multiple comparisons of a control algorithm~\cite{corder2011nonparametric}.
For the given instance, $X^{(+)}$ is equivalent to the statement that the algorithm in the column is statistically better than the algorithm $X$ for the tested instance. 
Conversely, $X^{(-)}$ is equivalent to the statement that $X$ outperformed the algorithm, 
while $X^{(=)}$ demonstrates that the algorithm given in the column and $X$ have a comparable performance.

\begin{table}[tb]
\centering
\caption{Results for maximum coverage problem with IID weight}
\label{subTable:r_iid}
\resizebox{\textwidth}{0.31\textwidth}{
\begin{tabular}{@{}lllllllllllllllll@{}}
\toprule
            &           &      &           &          & \multicolumn{3}{l}{\gs with $W_{cheb}$   (1)} & \multicolumn{3}{l}{\sg with   $W_{cheb}$ (2)} & \multicolumn{3}{l}{$\nsga_{20}$ with   $W_{cheb}$(3)} & \multicolumn{3}{l}{$\nsga_{100}$ with   $W_{cheb}$(4)} \\ \midrule
Graph       & Surrogate & $B$  & $t_{max}$ & $\alpha$ & Mean          & std         & stat              & Mean                 & std      & stat             & Mean           & std           & stat                & Mean            & std           & stat                \\
ca-CondaMat & Chebyshev & 146  & 1500000   & 0.1      & 5588          & 47.265      & 2(-),3(+),4(-)    & \textbf{6790.966}    & 12.335   & 1(+),3(+),4(+)   & 5187.966       & 95.708        & 1(-),2(-),4(-)      & 6330.3          & 36.893        & 1(+),2(-),3(+)      \\
            &           &      &           & 0.001    & 4153.733      & 43.338      & 2(-),3(+),4(-)    & \textbf{4748.166}    & 7.585    & 1(+),3(+),4(+)   & 3802.733       & 75.95         & 1(-),2(-),4(-)      & 4531.83         & 33.061        & 1(+),2(-),3(+)      \\
            &           &      & 1000000   & 0.1      & 5500.73       & 47.67       & 2(-),3(+),4(-)    & \textbf{6771.366}    & 16.3     & 1(+),3(+),4(+)   & 4945.9         & 113.293       & 1(-),2(-),4(-)      & 6281            & 46.322        & 1(+),2(-),3(+)      \\
            &           &      &           & 0.001    & 3957.5        & 40.782      & 2(-),3(+),4(-)    & \textbf{4736.133}    & 10.375   & 1(+),3(+),4(+)   & 3709.06        & 96.656        & 1(-),2(-),4(-)      & 4496.7          & 28.765        & 1(+),2(-),3(+)      \\
            &           &      & 500000    & 0.1      & 4818.53       & 47.71       & 2(-),3(+),4(-)    & \textbf{6708.366}    & 27.316   & 1(+),3(+),4(+)   & 4685.333       & 114.8373      & 1(-),2(-),4(-)      & 6115.233        & 54.393        & 1(+),2(-),3(+)      \\
            &           &      &           & 0.001    & 3648.3        & 65.33       & 2(-),3(+),4(-)    & \textbf{4581.633}    & 36.88    & 1(+),3(+),4(+)   & 3469.766       & 73.953        & 1(-),2(-),4(-)      & 4395.566        & 43.076        & 1(+),2(-),3(+)      \\
            &           & 1068 & 1500000   & 0.1      & 11787.533     & 57.133      & 2(-),3(+),4(-)    & \textbf{16650.933}   & 14.163   & 1(+),3(+),4(+)   & 12284.866      & 142.248       & 1(-),2(-),4(-)      & 13394.133       & 75.448        & 1(+),2(-),3(+)      \\
            &           &      &           & 0.001    & 10893.9       & 61.683      & 2(-),3(+),4(-)    & \textbf{15217.3}     & 14.45    & 1(+),3(+),4(+)   & 10950.3        & 122.3         & 1(-),2(-),4(-)      & 12241.966       & 72.716        & 1(+),2(-),3(+)      \\
            &           &      & 1000000   & 0.1      & 11194.23      & 72.488      & 2(-),3(-),4(-)    & \textbf{16573.6}     & 19.608   & 1(+),3(+),4(+)   & 11623.93       & 119.467       & 1(+),2(-),4(-)      & 13164.2         & 60.07         & 1(+),2(-),3(+)      \\
            &           &      &           & 0.001    & 10364.53      & 58.95       & 2(-),3(+),4(-)    & \textbf{15145.666}   & 14.485   & 1(+),3(+),4(+)   & 9987.13        & 123.936       & 1(-),2(-),4(-)      & 12040.166       & 99.78         & 1(+),2(-),3(+)      \\
            &           &      & 500000    & 0.1      & 9474.733      & 111.851     & 2(-),3(-),4(-)    & \textbf{16368.6}     & 27.005   & 1(+),3(+),4(+)   & 10729.133      & 180.573       & 1(+),2(-),4(-)      & 12708.533       & 82.447        & 1(+),2(-),3(+)      \\
            &           &      &           & 0.001    & 9344.733      & 84.931      & 2(-),3(-),4(-)    & \textbf{14947.6}     & 22.553   & 1(+),3(+),4(+)   & 9502.2         & 142.003       & 1(+),2(-),4(-)      & 11581.066       & 84.55         & 1(+),2(-),3(+)      \\
            &           & 2136 & 1500000   & 0.1      & 12749.533     & 93.378      & 2(-),3(-),4(-)    & \textbf{20078.833}   & 11.066   & 1(+),3(+),4(+)   & 16361.766      & 67.76         & 1(+),2(-),4(-)      & 16243.166       & 68.601        & 1(+),2(-),3(+)      \\
            &           &      &           & 0.001    & 12730.3       & 91.025      & 2(-),3(-),4(-)    & \textbf{19327.433}   & 11.221   & 1(+),3(+),4(+)   & 15224.733      & 66.313        & 1(+),2(-),4(-)      & 15402.133       & 96.35         & 1(+),2(-),3(+)      \\
            &           &      & 1000000   & 0.1      & 11520.966     & 129.843     & 2(-),3(-),4(-)    & \textbf{20016.2}     & 16.172   & 1(+),3(+),4(+)   & 15696.5        & 116.216       & 1(+),2(-),4(-)      & 16020.633       & 88.739        & 1(+),2(-),3(+)      \\
            &           &      &           & 0.001    & 11500.633     & 114.875     & 2(-),3(-),4(-)    & \textbf{19245.233}   & 15.532   & 1(+),3(+),4(+)   & 14544.63       & 98.014        & 1(+),2(-),4(-)      & 15155.4         & 74.248        & 1(+),2(-),3(+)      \\
            &           &      & 500000    & 0.1      & 9488.333      & 90.631      & 2(-),3(-),4(-)    & \textbf{19822.566}   & 20.619   & 1(+),3(+),4(+)   & 14592.466      & 104.587       & 1(+),2(-),4(-)      & 14734.2         & 230.948       & 1(+),2(-),3(+)      \\
            &           &      &           & 0.001    & 9483.6        & 88.03       & 2(-),3(-),4(-)    & \textbf{19030.533}   & 21.451   & 1(+),3(+),4(+)   & 13456.133      & 89.165        & 1(+),2(-),4(-)      & 14399.233       & 152.192       & 1(+),2(-),3(+)      \\ \midrule
ca-CondaMat & Chernoff & 146  & 1500000   & 0.1      & 5321.1333     & 40.291      & 2(-),3(+),4(-)    & \textbf{6424.2}      & 10.403   & 1(+),3(+),4(+)   & 4931.833       & 82.145        & 1(-),2(-),4(-)      & 6018.066        & 41.946        & 1(+),2(-),3(+)      \\
            &           &      &           & 0.001    & 5027.166      & 49.31       & 2(-),3(+),4(-)    & \textbf{5994.833}    & 10.96    & 1(+),3(+),4(+)   & 4622.866       & 96.07         & 1(-),2(-),4(-)      & 5652.833        & 41.121        & 1(+),2(-),3(+)      \\
            &           &      & 1000000   & 0.1      & 5059.6        & 39.678      & 2(-),3(+),4(-)    & \textbf{6397.966}    & 14.943   & 1(+),3(+),4(+)   & 4755.133       & 73.411        & 1(-),2(-),4(-)      & 5954.966        & 50.251        & 1(+),2(-),3(+)      \\
            &           &      &           & 0.001    & 4784.766      & 54.93       & 2(-),3(+),4(-)    & \textbf{5979.9}      & 16.912   & 1(+),3(+),4(+)   & 4441.5         & 126.327       & 1(-),2(-),4(-)      & 5582.666        & 38.694        & 1(+),2(-),3(+)      \\
            &           &      & 500000    & 0.1      & 4625.4        & 60.563      & 2(-),3(+),4(-)    & \textbf{6328.2}      & 31.971   & 1(+),3(+),4(+)   & 4443.2         & 84.517        & 1(-),2(-),4(-)      & 5787.266        & 63.911        & 1(+),2(-),3(+)      \\
            &           &      &           & 0.001    & 4344.33       & 59.972      & 2(-),3(+),4(-)    & \textbf{5898.133}    & 22.47    & 1(+),3(+),4(+)   & 4170           & 103.826       & 1(-),2(-),4(-)      & 5437.7          & 43.076        & 1(+),2(-),3(+)      \\
            &           & 1068 & 1500000   & 0.1      & 11632.833     & 52.573      & 2(-),3(-),4(-)    & \textbf{16650.933}   & 14.163   & 1(+),3(+),4(+)   & 12054.233      & 126.656       & 1(+),2(-),4(-)      & 13206.26        & 65.605        & 1(+),2(-),3(+)      \\
            &           &      &           & 0.001    & 11464.966     & 61.738      & 2(-),3(-),4(-)    & \textbf{15217.3}     & 14.45    & 1(+),3(+),4(+)   & 11783.9        & 95.969        & 1(+),2(-),4(-)      & 12974.86        & 85.587        & 1(+),2(-),3(+)      \\
            &           &      & 1000000   & 0.1      & 11059.2       & 73.482      & 2(-),3(-),4(-)    & \textbf{16343.833}   & 16.806   & 1(+),3(+),4(+)   & 11441.2        & 145.704       & 1(+),2(-),4(-)      & 12961.266       & 85.324        & 1(+),2(-),3(+)      \\
            &           &      &           & 0.001    & 10914.833     & 76.11       & 2(-),3(-),4(-)    & \textbf{16052.866}   & 19.687   & 1(+),3(+),4(+)   & 11208.1        & 91.525        & 1(+),2(-),4(-)      & 12779.033       & 59.812        & 1(+),2(-),3(+)      \\
            &           &      & 500000    & 0.1      & 9482.966      & 92.698      & 2(-),3(-),4(-)    & \textbf{16129.133}   & 27.284   & 1(+),3(+),4(+)   & 10487.566      & 128.411       & 1(+),2(-),4(-)      & 12489.3         & 88.034        & 1(+),2(-),3(+)      \\
            &           &      &           & 0.001    & 9466.433      & 113.403     & 2(-),3(-),4(-)    & \textbf{15840.433}   & 26.94    & 1(+),3(+),4(+)   & 10254.133      & 130.19        & 1(+),2(-),4(-)      & 12309.4         & 81.884        & 1(+),2(-),3(+)      \\
            &           & 2136 & 1500000   & 0.1      & 12719.533     & 106.011     & 2(-),3(-),4(-)    & \textbf{19955.3}     & 10.312   & 1(+),3(+),4(+)   & 16187.466      & 93.583        & 1(+),2(-),4(-)      & 16130.1         & 80.711        & 1(+),2(-),3(+)      \\
            &           &      &           & 0.001    & 12701.7       & 90.339      & 2(-),3(-),4(-)    & \textbf{19813.8}     & 14.041   & 1(+),3(+),4(+)   & 16006.8        & 109.226       & 1(+),2(-),4(-)      & 15964.933       & 75.003        & 1(+),2(-),3(+)      \\
            &           &      & 1000000   & 0.1      & 11481.066     & 98.619      & 2(-),3(-),4(-)    & \textbf{19891.466}   & 13.197   & 1(+),3(+),4(+)   & 15519.3        & 102.403       & 1(+),2(-),4(-)      & 15812.433       & 93.623        & 1(+),2(-),3(+)      \\
            &           &      &           & 0.001    & 11458.1       & 116.396     & 2(-),3(-),4(-)    & \textbf{19750.366}   & 12.084   & 1(+),3(+),4(+)   & 15323.266      & 103.988       & 1(+),2(-),4(-)      & 15684.533       & 83.137        & 1(+),2(-),3(+)      \\
            &           &      & 500000    & 0.1      & 9451.033      & 105.529     & 2(-),3(-),4(-)    & \textbf{19697.033}   & 17.995   & 1(+),3(+),4(+)   & 14435.233      & 101.01        & 1(+),2(-),4(-)      & 14650.033       & 289.611       & 1(+),2(-),3(+)      \\
            &           &      &           & 0.001    & 9460.666      & 116.783     & 2(-),3(-),4(-)    & \textbf{19540.266}   & 18.77    & 1(+),3(+),4(+)   & 14218.8        & 129.106       & 1(+),2(-),4(-)      & 14586.566       & 214.181       & 1(+),2(-),3(+)      \\ \bottomrule
\end{tabular}
}
\end{table}

\subsection{Experimental results}

In the experiments, we first investigate the performance of \sg on the problem with different settings. 
Then, we visualize and demonstrate the behavior of the sliding-selection approach working in the optimization. 

\subsubsection{Results comparison}

Table~\ref{subTable:r_iid} and~\ref{subTable:r_uwd} displays the final function values achieved by the algorithms in the problem with different settings, which vary based on the surrogate functions used, and are tested across different iterations and values of $\alpha$.
Additional results are provided in Tables~\ref{table:iid_cheb}, \ref{table:iid_chef}, \ref{table:uwd_cheb} and \ref{table:uwd_chf} in Appendix\footnote{Appendix is in \href{https://doi.org/10.5281/zenodo.10984210}{here}.}.
Overall, for smaller graphs with a lower bound $B$, the \sg, \gs, and \nsga with different population sizes perform comparably. 
However, the \nsga can get better results when the expected weights are uniform, which is reflected in Table~\ref{subTable:r_uwd}.
For larger graph instances with larger bounds, the \sg gradually surpasses the other algorithms in performance. The mean and standard deviation of the \sg's results are also comparable to those of other algorithms across instances with varying $t_{max}$.
It is observed that as $t_{max}$ increases, the general performance of all algorithms improves. Moreover, while the function value differences for various $\alpha$ are not significant in larger graphs with higher bounds, they become substantial in smaller graphs.
Interestingly, the performance of the \nsga with population sizes of 20 and 100 is superior to that of the \gs in larger, sparser graph cases. 
This observation contrasts with findings from previous work~\cite{neumann2020optimising}, which uses dense graphs.
In terms of evaluating the problem with chance constraints by different surrogate methods, the results from Table~\ref{subTable:r_iid} and \ref{subTable:r_uwd} suggest that algorithms employing the one-sided Chebyshev's inequality are better than those using the Chernoff bound when $\alpha$ is large. On the other hand, the performance of algorithms based on the Chernoff bound becomes worse with smaller $\alpha$ values.

\begin{table}[tb]
\centering
\caption{Results for maximum coverage problem with uniform weights with same dispersion}
\label{subTable:r_uwd}
\resizebox{\textwidth}{0.31\textwidth}{
\begin{tabular}{@{}lllllllllllllllll@{}}
\toprule
            &           &      &           &          & \multicolumn{3}{l}{\gs (9)}         & \multicolumn{3}{l}{\sg  (10)}           & \multicolumn{3}{l}{$\nsga_{20}$ (11)} & \multicolumn{3}{l}{$\nsga_{100}$ (12)}     \\ \midrule
Graph       & Surrogate & $B$  & $t_{max}$ & $\alpha$ & Mean     & std    & stat              & Mean              & std   & stat             & Mean      & std     & stat             & Mean             & std   & stat             \\
ca-CondaMat & Chebyshev & 146  & 1500000   & 0.1      & 141.266  & 0.679  & 10(=),11(=),12(=) & \textbf{141.8}    & 0.979 & 9(=),11(=),12(=) & 141.266   & 0.813   & 9(=),10(=),12(=) & \textbf{141.8}   & 0.979 & 9(=),10(=),11(=) \\
            &           &      &           & 0.001    & 122.9    & 3.014  & 10(-),11(-),12(-) & 125.933           & 0.249 & 9(+),11(=),12(=) & 125.7     & 0.458   & 9(+),10(=),12(=) & \textbf{126}     & 0     & 9(+),10(=),11(=) \\
            &           &      & 1000000   & 0.1      & 141.2    & 0.6    & 10(=),11(=),12(=) & \textbf{141.733}  & 0.963 & 9(=),11(=),12(=) & 141       & 0.632   & 9(=),10(=),12(=) & 141.533          & 0.884 & 9(=),10(=),11(=) \\
            &           &      &           & 0.001    & 122.23   & 3.051  & 10(-),11(-),12(-) & 125.9             & 0.3   & 9(+),11(=),12(=) & 125.366   & 1.048   & 9(+),10(=),12(=) & \textbf{126}     & 0     & 9(+),10(=),11(=) \\
            &           &      & 500000    & 0.1      & 141.133  & 0.498  & 10(=),11(=),12(=) & \textbf{141.8}    & 0.979 & 9(=),11(=),12(=) & 140.833   & 0.734   & 9(=),10(=),12(=) & 141.333          & 0.745 & 9(=),10(=),11(=) \\
            &           &      &           & 0.001    & 120.733  & 3.172  & 10(-),11(-),12(-) & 125.6             & 0.663 & 9(+),11(=),12(=) & 124.066   & 2.657   & 9(+),10(=),12(=) & \textbf{125.7}   & 0.458 & 9(+),10(=),11(=) \\
            &           & 1068 & 1500000   & 0.1      & 1037.266 & 1.093  & 10(-),11(+),12(=) & \textbf{1044.833} & 0.933 & 9(+),11(+),12(+) & 1015.333  & 4.706   & 9(-),10(-),12(-) & 1037.833         & 2.646 & 9(=),10(-),12(+) \\
            &           &      &           & 0.001    & 978.4    & 2.751  & 10(-),11(+),12(-) & 991.766           & 2.347 & 9(+),11(+),12(=) & 959.566   & 10.892  & 9(-),10(-),12(-) & \textbf{992.2}   & 3.664 & 9(+),10(=),11(+) \\
            &           &      & 1000000   & 0.1      & 1034.933 & 1.152  & 10(-),11(+),12(=) & \textbf{1044.133} & 1.231 & 9(+),11(+),12(+) & 1012.333  & 5.204   & 9(-),10(-),12(-) & 1036.833         & 2.956 & 9(=),10(-),12(+) \\
            &           &      &           & 0.001    & 975.1    & 2.3288 & 10(-),11(+),12(-) & 989.5             & 2.202 & 9(+),11(+),12(=) & 954.866   & 10.616  & 9(-),10(-),12(-) & \textbf{991.3}   & 3.377 & 9(+),10(=),11(+) \\
            &           &      & 500000    & 0.1      & 1030.833 & 1.293  & 10(-),11(+),12(+) & \textbf{1041.633} & 1.251 & 9(+),11(+),12(+) & 1008.633  & 6.441   & 9(-),10(-),12(-) & 1035.233         & 2.641 & 9(+),10(-),12(+) \\
            &           &      &           & 0.001    & 967.666  & 3.418  & 10(-),11(+),12(-) & 985.9             & 2.748 & 9(+),11(+),12(=) & 947.233   & 10.932  & 9(-),10(-),12(-) & \textbf{988.866} & 4.145 & 9(+),10(=),11(+) \\
            &           & 2136 & 1500000   & 0.1      & 2035.066 & 2.92   & 10(-),11(+),12(+) & \textbf{2071.066} & 1.412 & 9(+),11(+),12(+) & 1963.4    & 9.844   & 9(-),10(-),12(-) & 2025.6           & 5.689 & 9(+),10(-),11(+) \\
            &           &      &           & 0.001    & 1925.3   & 3.671  & 10(-),11(+),12(-) & \textbf{1972.433} & 3.402 & 9(+),11(+),12(+) & 1850.633  & 13.345  & 9(-),10(-),12(-) & 1942.566         & 6.189 & 9(+),10(-),11(+) \\
            &           &      & 1000000   & 0.1      & 2026.966 & 3.341  & 10(-),11(+),12(+) & \textbf{2068.033} & 1.905 & 9(+),11(+),12(+) & 1956.3    & 9.987   & 9(-),10(-),12(-) & 2022.6           & 6.58  & 9(+),10(-),11(+) \\
            &           &      &           & 0.001    & 1914.7   & 3.831  & 10(-),11(+),12(-) & \textbf{1969.433} & 3.666 & 9(+),11(+),12(+) & 1839.766  & 13.313  & 9(-),10(-),12(-) & 1939.833         & 7.55  & 9(+),10(-),11(+) \\
            &           &      & 500000    & 0.1      & 2009.366 & 4.214  & 10(-),11(+),12(-) & \textbf{2063.166} & 2.852 & 9(+),11(+),12(+) & 1941.3    & 11.346  & 9(-),10(-),12(-) & 2016.5           & 5.942 & 9(+),10(-),11(+) \\
            &           &      &           & 0.001    & 1893.5   & 4.055  & 10(-),11(+),12(-) & \textbf{1960}     & 3.705 & 9(+),11(+),12(+) & 1821      & 14.61   & 9(-),10(-),12(-) & 1931.866         & 6.443 & 9(+),10(-),11(+) \\ \midrule
ca-CondaMat & Chernoff  & 146  & 1500000   & 0.1      & 139      & 0      & 10(=),11(=),12(=) & \textbf{139}      & 0     & 9(=),11(=),12(=) & 139       & 0       & 9(=),10(=),12(=) & \textbf{139}     & 0     & 9(=),10(=),11(=) \\
            &           &      &           & 0.001    & 137.1    & 1.445  & 10(=),11(=),12(=) & 138.2             & 1.326 & 9(=),11(=),12(=) & 136.9     & 1.374   & 9(=),10(=),12(=) & \textbf{138.3}   & 1.268 & 9(=),10(=),11(=) \\
            &           &      & 1000000   & 0.1      & 139      & 0      & 10(=),11(=),12(=) & \textbf{139}      & 0     & 9(=),11(=),12(=) & 139       & 0       & 9(=),10(=),12(=) & \textbf{139}     & 0     & 9(=),10(=),11(=) \\
            &           &      &           & 0.001    & 136.7    & 1.1268 & 10(=),11(=),12(=) & 137.8             & 1.469 & 9(=),11(=),12(=) & 136.5     & 1.118   & 9(=),10(=),12(=) & \textbf{138}     & 1.414 & 9(=),10(=),11(=) \\
            &           &      & 500000    & 0.1      & 139      & 0      & 10(=),11(+),12(=) & \textbf{139}      & 0     & 9(=),11(+),12(=) & 138.966   & 0.179   & 9(-),10(-),12(-) & \textbf{139}     & 0     & 9(=),10(=),11(+) \\
            &           &      &           & 0.001    & 136.4    & 1.019  & 10(=),11(=),12(=) & 136.8             & 1.326 & 9(=),11(=),12(=) & 136.266   & 0.928   & 9(=),10(=),12(=) & \textbf{137.1}   & 1.445 & 9(=),10(=),11(=) \\
            &           & 1068 & 1500000   & 0.1      & 1031.266 & 1.59   & 10(-),11(+),12(=) & \textbf{1039.366} & 1.139 & 9(+),11(+),12(+) & 1008.4    & 5.505   & 9(-),10(-),12(-) & 1033.966         & 2.994 & 9(=),10(-),11(+) \\
            &           &      &           & 0.001    & 1022.533 & 1.726  & 10(-),11(+),12(-) & \textbf{1031.533} & 0.956 & 9(+),11(+),12(+) & 1001.133  & 5.754   & 9(-),10(-),12(-) & 1026.966         & 2.575 & 9(+),10(-),11(+) \\
            &           &      & 1000000   & 0.1      & 1029.066 & 1.31   & 10(-),11(+),12(-) & \textbf{1038.166} & 1.097 & 9(+),11(+),12(+) & 1005.833  & 6.044   & 9(-),10(-),12(-) & 1033.266         & 3.203 & 9(+),10(-),11(+) \\
            &           &      &           & 0.001    & 1019.466 & 1.783  & 10(-),11(+),12(-) & \textbf{1030.366} & 1.425 & 9(+),11(+),12(=) & 997.466   & 6.463   & 9(-),10(-),12(-) & 1026.066         & 2.249 & 9(+),10(=),11(+) \\
            &           &      & 500000    & 0.1      & 1024.633 & 1.87   & 10(-),11(+),12(-) & \textbf{1036.133} & 0.956 & 9(+),11(+),12(+) & 1000.7    & 7.299   & 9(-),10(-),12(-) & 1031.766         & 3.921 & 9(+),10(-),11(+) \\
            &           &      &           & 0.001    & 1014.1   & 2.211  & 10(-),11(+),12(-) & \textbf{1027.833} & 1.507 & 9(+),11(+),12(=) & 992.5     & 7.069   & 9(-),10(-),12(-) & 1024.866         & 2.459 & 9(+),10(=),11(+) \\
            &           & 2136 & 1500000   & 0.1      & 2023.533 & 2.704  & 10(-),11(+),12(+) & \textbf{2062.166} & 1.881 & 9(+),11(+),12(+) & 1949.366  & 9.064   & 9(-),10(-),12(-) & 2019.5           & 6.687 & 9(+),10(-),11(+) \\
            &           &      &           & 0.001    & 2006.533 & 2.376  & 10(-),11(+),12(+) & \textbf{2041.2}   & 2.072 & 9(+),11(+),12(+) & 1932.4    & 10.694  & 9(-),10(-),12(-) & 2003.3           & 5.502 & 9(+),10(-),11(+) \\
            &           &      & 1000000   & 0.1      & 2015.366 & 3.219  & 10(-),11(+),12(=) & \textbf{2059.433} & 1.994 & 9(+),11(+),12(+) & 1942.266  & 9.051   & 9(-),10(-),12(-) & 2014.9           & 5.497 & 9(+),10(-),11(+) \\
            &           &      &           & 0.001    & 1997.5   & 3.232  & 10(-),11(+),12(-) & \textbf{2046.233} & 1.977 & 9(+),11(+),12(+) & 1924.566  & 11.632  & 9(-),10(-),12(-) & 2000.533         & 6.173 & 9(+),10(-),11(+) \\
            &           &      & 500000    & 0.1      & 1995.8   & 3.187  & 10(-),11(+),12(-) & \textbf{2052.833} & 2.296 & 9(+),11(+),12(+) & 1929.9    & 10.077  & 9(-),10(-),12(-) & 2008.766         & 6.897 & 9(+),10(-),11(+) \\
            &           &      &           & 0.001    & 1977.633 & 2.857  & 10(-),11(+),12(-) & \textbf{2037.933} & 2.555 & 9(+),11(+),12(+) & 1909.133  & 12.164  & 9(-),10(-),12(-) & 1993.9           & 7.449 & 9(+),10(-),11(+) \\ \bottomrule
\end{tabular}}
\end{table}

\begin{table}[ht]
\centering
\caption{Average number of trade-off solutions obtained by \gs and \sg in ca-CondaMat with IID weights}
\label{table:num_indiv}
\resizebox{0.75\textwidth}{3cm}{
\begin{tabular}{llllcccc}
\hline
            & $B$  & $t_{max}$ & $\alpha$ & \begin{tabular}[c]{@{}c@{}}\gs \\ $W_{cheb}$\end{tabular} & \begin{tabular}[c]{@{}c@{}}\sg \\ $W_{cheb}$\end{tabular} & \begin{tabular}[c]{@{}c@{}}\gs\\ $W_{chern}$\end{tabular} & \begin{tabular}[c]{@{}c@{}}\sg\\ $W_{chern}$\end{tabular} \\ \hline
ca-CondaMat & 146  & 1.5M   & 0.1   & 136                                                         & 136                                                             & 122                                                        & 122                                                            \\
&      &        & 0.001 & 70                                                          & 70                                                              & 107                                                        & 107                                                            \\
            &      & 1.0M   & 0.1   & 135                                                         & 136                                                             & 122                                                        & 122                                                            \\
            &      &        & 0.001 & 70                                                          & 70                                                              & 107                                                        & 107                                                            \\
            &      & 0.5M   & 0.1   & 136                                                         & 136                                                             & 122                                                        & 122                                                            \\
            &      &        & 0.001 & 70                                                          & 70                                                              & 107                                                        & 107                                                            \\
            & 1068 & 1.5M   & 0.1   & 927                                                         & 1039                                                            & 880                                                        & 998                                                            \\
            &      &        & 0.001 & 753                                                         & 808                                                             & 860                                                        & 948                                                            \\
            &      & 1.0M   & 0.1   & 883                                                         & 1040                                                            & 858                                                        & 997                                                            \\
            &      &        & 0.001 & 728                                                         & 809                                                             & 807                                                        & 949                                                            \\
            &      & 0.5M   & 0.1   & 720                                                         & 1034                                                            & 731                                                        & 997                                                            \\
            &      &        & 0.001 & 652                                                         & 809                                                             & 733                                                        & 948                                                            \\
            & 2136 & 1.5M   & 0.1   & 1160                                                        & 2066                                                            & 1174                                                       & 2024                                                           \\
            &      &        & 0.001 & 1174                                                        & 1752                                                            & 1217                                                       & 1960                                                           \\
            &      & 1.0M   & 0.1   & 968                                                         & 2073                                                            & 1006                                                       & 2022                                                           \\
            &      &        & 0.001 & 984                                                         & 1747                                                            & 989                                                        & 1947                                                           \\
            &      & 0.5M   & 0.1   & 707                                                         & 2050                                                            & 776                                                        & 1726                                                           \\
            &      &        & 0.001 & 748                                                         & 1735                                                            & 731                                                        & 1930                                                           \\ \hline
\end{tabular}
}
\end{table}

\begin{figure}[t]
    \centering
    \begin{subfigure}[b]{0.45\textwidth}
        \centering
        \includegraphics[scale=0.31]{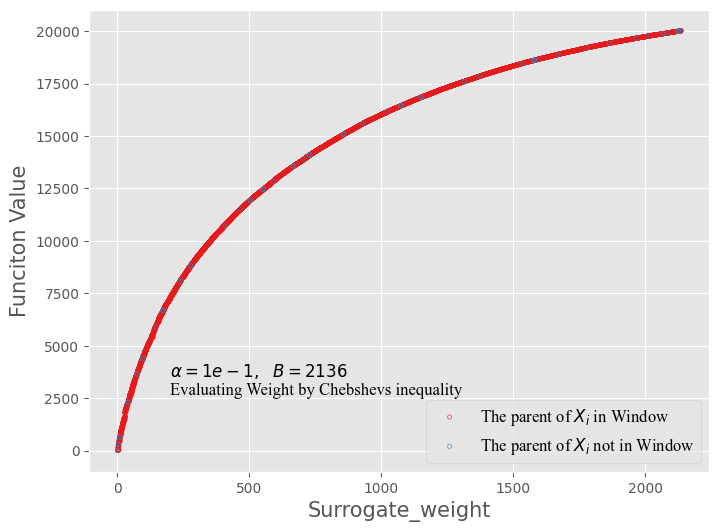}
        \caption{IID weights}
        \label{fig:condmat-1-opt}
    \end{subfigure}
    \hfill
    \begin{subfigure}[b]{0.45\textwidth}
        \centering
        \includegraphics[scale=0.31]{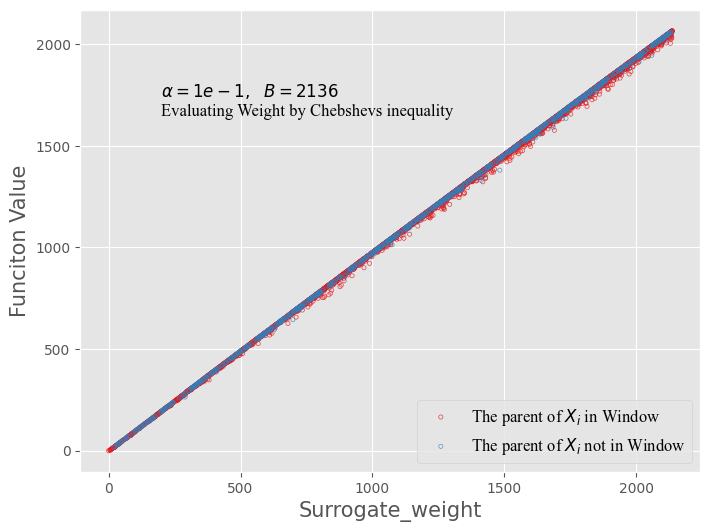}
        \caption{Uniform weights }
        \label{fig:condmat-2-opt}
    \end{subfigure}
    
    \caption{Optimization process of \sg for ca-CondaMat}
\end{figure}

\begin{figure}[t]
    \centering
    \begin{subfigure}[b]{0.45\textwidth}
        \centering
        \includegraphics[scale=0.31]{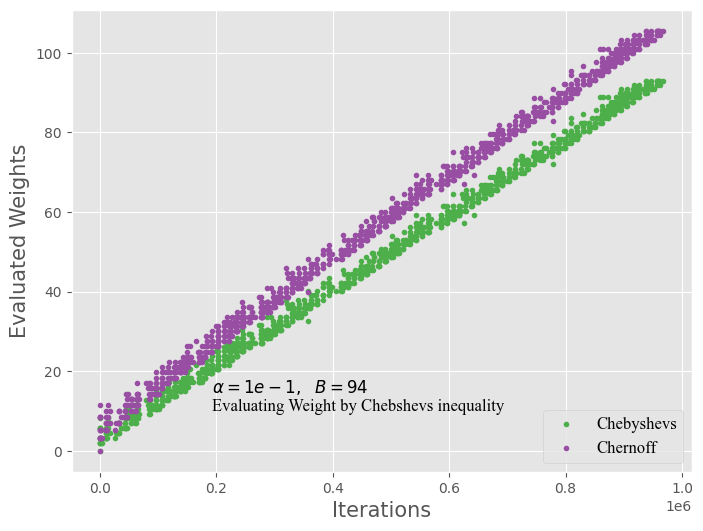}
        \caption{Chebshev's Inequality }
        \label{fig:cheb_vs_chef}
    \end{subfigure}
    \hfill
    \begin{subfigure}[b]{0.45\textwidth}
        \centering
        \includegraphics[scale=0.31]{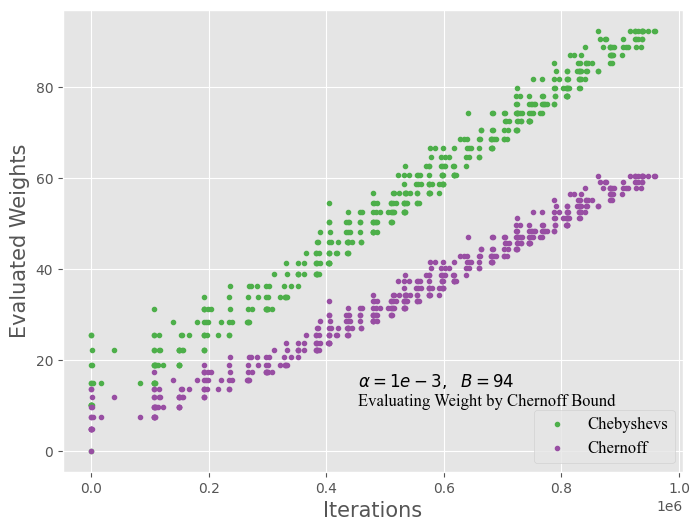}
        \caption{Chernoff bound}
        \label{fig:chef_vs_cheb}
    \end{subfigure}
    
    \caption{Different surrogate weights obtained during optimization by the \sg based on different surrogates in ca-CSphd with IID weights}
\end{figure}

Table~\ref{table:num_indiv} shows the average number of trade-off solutions obtained by the \gs and \sg using different surrogate functions on the problem based on the graph ca-CondaMat with IID weights.
Notably, the \sg produces a greater number of trade-off solutions in the final population than the \gs, particularly as the bound increases. 
Furthermore, the table reveals that the number of solutions in the population is consistently lower than $k$ (as defined in Section~\ref{sec:iid}, where $k = min\{n+1, \lfloor(B/a+1)\rfloor\}$).
When comparing with the deterministic setting, it is observed that the number of trade-off solutions generated by the algorithms in the IID weight setting decreases by almost 50\% when $\alpha = 0.001$, particularly when using the surrogate function based on one-sided Chebyshev's inequality.

\subsubsection{Visualization of \textit{SW-GSEMO}}

To focus on the \sg's performance in optimizing chance-constrained problems, 
Figures~\ref{fig:condmat-1-opt} and \ref{fig:condmat-2-opt} offer an illustrative example of the optimization process.
The figures illustrate the relationship between the surrogate weight and the function value of the solutions selected for the population, with different colors labeling solutions based on whether their parents were within the defined weight window.
Initially, an increase in function value corresponding to an increase in surrogate weight is observed.
It's noteworthy that the same surrogate weight might correspond to multiple distinct function values. 
According to the algorithm's domination scheme, among solutions with the same surrogate weight, all except the one with the highest function value are eliminated from the population.
Besides, the figures highlight that there are some periods where the \sg is unable to include any individuals within the window (particularly when the expected weights are uniform).
Despite these periods, the sliding window mechanism remains effective throughout the optimization process, aiding the algorithm in achieving satisfactory results. 
Additionally, those blue search points are also close to the Pareto front area and do not impact the final results.

Additionally, Figures~\ref{fig:cheb_vs_chef} and \ref{fig:chef_vs_cheb} describe the changes in surrogate weights of solutions across iterations.
A noticeable trend is that the sliding windows align the surrogate weights in a linear pattern,
where the surrogate weight generally increases with more iterations.
Figure~\ref{fig:cheb_vs_chef} indicates that the surrogate weight derived from the one-sided Chebyshev's inequality is lower than that from the Chernoff bound when $\alpha = 0.1$.
Conversely, Figure~\ref{fig:chef_vs_cheb} shows the opposite trend when $\alpha = 0.001$. 
Furthermore, due to the influence of the chance constraint,
the sliding-selection method, when using the surrogate weight, allows only one individual within the window under the IID wights setting
for the instances that are applied to the IID weights setting according to Figure~\ref{fig:chef_vs_cheb}, aligning with our analysis in Section~\ref{sec:iid}.

\section{Conclusion}
\label{sec:con}
In this paper, we investigated the use of \sg on chance-constrained monotone submodular optimization problems with IID weights and uniform weights with the same dispersion. Surrogate functions based on Chebshev's inequality and Chernoff bound have been applied to evaluate the chance constraint.  
We showed theoretically that the \sg with the surrogate can reach the same approximation result in a more efficient way than the \gs that was studied in previous work. 
Furthermore, the algorithm is applied to the maximum coverage problem in the experiments and its results are compared with other multi-objective algorithms under variable instances constructed by different graphs.
The experiments demonstrated that the window defined in \sg is sliding in the weight interval during the optimization.  
Additionally, the obtained results show that the \sg with the surrogate based on one-sided Chebyshev's inequality performs better than the \gs and \nsga (with population sizes 20 and 100) among most of the instances when $\alpha$ is larger,
and the \sg using the Chernoff bound works best when $\alpha$ is smaller. 
For future work, it would be interesting to consider other generalized settings with different distributions and covariances as part of the chance-constrained formulation.

\begin{credits}
\subsubsection{\ackname} This work has been supported by the Australian Research Council (ARC) through grant FT200100536.

\end{credits}
%
%
%
\bibliographystyle{splncs04}
\bibliography{ref}

\begin{thebibliography}{10}
\providecommand{\url}[1]{\texttt{#1}}
\providecommand{\urlprefix}{URL }
\providecommand{\doi}[1]{https://doi.org/#1}

\bibitem{charnes1959chance}
Charnes, A., Cooper, W.W.: Chance-constrained programming. Management science  \textbf{6}(1),  73--79 (1959)

\bibitem{corder2011nonparametric}
Corder, G.W., Foreman, D.I.: Nonparametric statistics for non-statisticians (2011)

\bibitem{corus2013generalized}
Corus, D., Lehre, P.K., Neumann, F.: The generalized minimum spanning tree problem: a parameterized complexity analysis of bi-level optimisation. In: Proceedings of the 15th annual conference on Genetic and evolutionary computation. pp. 519--526 (2013)

\bibitem{doerr2020optimization}
Doerr, B., Doerr, C., Neumann, A., Neumann, F., Sutton, A.: Optimization of chance-constrained submodular functions. In: Proceedings of the AAAI Conference on Artificial Intelligence. vol.~34, pp. 1460--1467 (2020)

\bibitem{TCHTTP}
Don, T.P., Neumann, A., Neumann, F.: The chance constrained travelling thief problem: Problem formulations and algorithms. In: Genetic and Evolutionary Computation Conference, GECCO 2024. {ACM} (2024), to appear

\bibitem{feige1998threshold}
Feige, U.: A threshold of ln n for approximating set cover. Journal of the ACM (JACM)  \textbf{45}(4),  634--652 (1998)

\bibitem{friedrich2009analyses}
Friedrich, T., He, J., Hebbinghaus, N., Neumann, F., Witt, C.: Analyses of simple hybrid algorithms for the vertex cover problem. Evolutionary Computation  \textbf{17}(1),  3--19 (2009)

\bibitem{friedrich2007approximating}
Friedrich, T., Hebbinghaus, N., Neumann, F., He, J., Witt, C.: Approximating covering problems by randomized search heuristics using multi-objective models. In: Proceedings of the 9th annual conference on Genetic and evolutionary computation. pp. 797--804 (2007)

\bibitem{iwamura1996genetic}
Iwamura, K., Liu, B.: A genetic algorithm for chance constrained programming. Journal of Information and Optimization sciences  \textbf{17}(2),  409--422 (1996)

\bibitem{khuller1999budgeted}
Khuller, S., Moss, A., Naor, J.S.: The budgeted maximum coverage problem. Information processing letters  \textbf{70}(1),  39--45 (1999)

\bibitem{kratsch2010fixed}
Kratsch, S., Lehre, P.K., Neumann, F., Oliveto, P.S.: Fixed parameter evolutionary algorithms and maximum leaf spanning trees: A matter of mutation. In: Parallel Problem Solving from Nature, PPSN XI: 11th International Conference, Krak{\'o}w, Poland, September 11-15, 2010, Proceedings, Part I 11. pp. 204--213. Springer (2010)

\bibitem{lehre2021more}
Lehre, P.K., Qin, X.: More precise runtime analyses of non-elitist eas in uncertain environments. In: Proceedings of the Genetic and Evolutionary Computation Conference. pp. 1160--1168 (2021)

\bibitem{leskovec2007cost}
Leskovec, J., Krause, A., Guestrin, C., Faloutsos, C., VanBriesen, J., Glance, N.: Cost-effective outbreak detection in networks. In: Proceedings of the 13th ACM SIGKDD international conference on Knowledge discovery and data mining. pp. 420--429 (2007)

\bibitem{lissovoi2017runtime}
Lissovoi, A., Witt, C.: A runtime analysis of parallel evolutionary algorithms in dynamic optimization. Algorithmica  \textbf{78},  641--659 (2017)

\bibitem{miller1965chance}
Miller, B.L., Wagner, H.M.: Chance constrained programming with joint constraints. Operations Research  \textbf{13}(6),  930--945 (1965)

\bibitem{nemhauser1978analysis}
Nemhauser, G.L., Wolsey, L.A., Fisher, M.L.: An analysis of approximations for maximizing submodular set functions—i. Mathematical programming  \textbf{14}(1),  265--294 (1978)

\bibitem{neumann2020optimising}
Neumann, A., Neumann, F.: Optimising monotone chance-constrained submodular functions using evolutionary multi-objective algorithms. In: Parallel Problem Solving from Nature - {PPSN} {XVI} - 16th International Conference, {PPSN} 2020, Proceedings, Part {I}. Lecture Notes in Computer Science, vol. 12269, pp. 404--417. Springer (2020)

\bibitem{neumann2020analysis}
Neumann, F., Pourhassan, M., Roostapour, V.: Analysis of evolutionary algorithms in dynamic and stochastic environments. Theory of evolutionary computation: recent developments in discrete optimization pp. 323--357 (2020)

\bibitem{neumann2018runtime}
Neumann, F., Sutton, A.M.: Runtime analysis of evolutionary algorithms for the knapsack problem with favorably correlated weights. In: International Conference on Parallel Problem Solving from Nature. pp. 141--152. Springer (2018)

\bibitem{neumann2015runtime}
Neumann, F., Witt, C.: On the runtime of randomized local search and simple evolutionary algorithms for dynamic makespan scheduling. In: 24th International Joint Conference on Artificial Intelligence. pp. 3742--3748. AAAI Press (2015)

\bibitem{Neumann2023fast}
Neumann, F., Witt, C.: Fast pareto optimization using sliding window selection. In: ECAI 2023, pp. 1771--1778. IOS Press (2023)

\bibitem{E23OMOEA}
Pathiranage, I.H., Neumann, F., Antipov, D., Neumann, A.: Effective 2- and 3-objective moea/d approaches for the chance constrained knapsack problem. In: Genetic and Evolutionary Computation Conference, GECCO 2024. {ACM} (2024), to appear

\bibitem{U3OEADCHKP}
Pathiranage, I.H., Neumann, F., Antipov, D., Neumann, A.: Using 3-objective evolutionary algorithms for the dynamic chance constrained knapsack problem. In: Genetic and Evolutionary Computation Conference, GECCO 2024. {ACM} (2024), to appear

\bibitem{MOEAwSWS}
Perera, K., Neumann, A.: Multi-objective evolutionary algorithms with sliding window selection for the dynamic chance-constrained knapsack problem. In: Genetic and Evolutionary Computation Conference, GECCO 2024. {ACM} (2024), to appear

\bibitem{poojari2008genetic}
Poojari, C.A., Varghese, B.: Genetic algorithm based technique for solving chance constrained problems. European journal of operational research  \textbf{185}(3),  1128--1154 (2008)

\bibitem{roostapour2018performance}
Roostapour, V., Neumann, A., Neumann, F.: On the performance of baseline evolutionary algorithms on the dynamic knapsack problem. In: International Conference on Parallel Problem Solving from Nature. pp. 158--169. Springer (2018)

\bibitem{roostapour2022pareto}
Roostapour, V., Neumann, A., Neumann, F., Friedrich, T.: Pareto optimization for subset selection with dynamic cost constraints. Artificial Intelligence  \textbf{302},  103597 (2022)

\bibitem{rossi2015network}
Rossi, R., Ahmed, N.: The network data repository with interactive graph analytics and visualization. In: Proceedings of the AAAI conference on artificial intelligence. vol.~29 (2015)

\bibitem{shi2018runtime}
Shi, F., Neumann, F., Wang, J.: Runtime analysis of randomized search heuristics for the dynamic weighted vertex cover problem. In: Proceedings of the Genetic and Evolutionary Computation Conference. pp. 1515--1522 (2018)

\bibitem{shi2022runtime}
Shi, F., Yan, X., Neumann, F.: Runtime analysis of simple evolutionary algorithms for the chance-constrained makespan scheduling problem. In: International Conference on Parallel Problem Solving from Nature. pp. 526--541. Springer (2022)

\bibitem{wu2016impact}
Wu, J., Polyakovskiy, S., Neumann, F.: On the impact of the renting rate for the unconstrained nonlinear knapsack problem. In: Proceedings of the Genetic and Evolutionary Computation Conference 2016. pp. 413--419 (2016)

\bibitem{yan2023optimizing}
Yan, X., Do, A.V., Shi, F., Qin, X., Neumann, F.: Optimizing chance-constrained submodular problems with variable uncertainties. In: ECAI 2023, pp. 2826--2833. IOS Press (2023)

\bibitem{yan2024sampling}
Yan, X., Neumann, A., Neumann, F.: Sampling-based pareto optimization for chance-constrained monotone submodular problems. In: Genetic and Evolutionary Computation Conference, GECCO 2024. {ACM} (2024), to appear

\bibitem{yaroslavtsev2020bring}
Yaroslavtsev, G., Zhou, S., Avdiukhin, D.: “bring your own greedy”+ max: Near-optimal 1/2-approximations for submodular knapsack. In: International Conference on Artificial Intelligence and Statistics. pp. 3263--3274. PMLR (2020)

\end{thebibliography}

\newpage
\appendix

\section{Tables}
\label{app:a}

\begin{table}[h]
\caption{Results for Maximum coverage problem with IID weights where the evaluation is based on Chebyshev's equality}
\label{table:iid_cheb}
\resizebox{\textwidth}{0.5\textwidth}{
\begin{tabular}{@{}llllllllllllllll@{}}
\toprule
            &      &           &          & \multicolumn{3}{l}{\gs (1)} & \multicolumn{3}{l}{\sg (2)} & \multicolumn{3}{l}{$\nsga_{20}$ (3)} & \multicolumn{3}{l}{$\nsga_{100}$ (4)} \\ \midrule
Graph       & $B$  & $t_{max}$ & $\alpha$ & Mean            & std        & stat             & Mean                 & std      & stat             & Mean           & std           & stat                & Mean              & std          & stat               \\
ca-CSphd    & 43   & 1500000   & 0.1      & \textbf{546}    & 0          & 2(=),3(+),4(+)   & \textbf{546}         & 0        & 1(=),3(+),4(+)   & 533.566        & 5.064         & 1(-),2(-),4(-)      & 545.233           & 0.76         & 1(-),2(-),3(+)     \\
&      &           & 0.001    & \textbf{238}    & 0          & 2(=),3(+),4(=)   & \textbf{238}         & 0        & 1(=),3(+),4(=)   & 237.6          & 0.611         & 1(-),2(-),4(-)      & \textbf{238}      & 0            & 1(=),2(=),3(-)     \\
            &      & 1000000   & 0.1      & 545.966         & 0.179      & 2(-),3(+),4(+)   & \textbf{546}         & 0        & 1(=),3(+),4(+)   & 528.933        & 6.196         & 1(-),2(-),4(-)      & 544.733           & 1.123        & 1(-),2(-),3(+)     \\
            &      &           & 0.001    & \textbf{238}    & 0          & 2(=),3(+),4(=)   & \textbf{238}         & 0        & 1(=),3(+),4(=)   & 236.933        & 0.727         & 1(-),2(-),4(-)      & \textbf{238}      & 0            & 1(=),2(=),3(+)     \\
            &      & 500000    & 0.1      & \textbf{543}    & 2.065      & 2(-),3(+),4(-)   & \textbf{546}         & 0        & 1(+),3(+),4(+)   & 517.266        & 8.35          & 1(-),2(-),4(-)      & 543.533           & 1.477        & 1(+),2(-),3(+)     \\
            &      &           & 0.001    & \textbf{238}    & 0          & 2(=),3(+),4(+)   & \textbf{238}         & 0        & 1(=),3(+),4(+)   & 235.966        & 1.538         & 1(-),2(-),4(-)      & 237.9             & 0.3          & 1(-),2(-),3(+)     \\
            & 94   & 1500000   & 0.1      & 880.993         & 0.727      & 2(-),3(+),4(+)   & \textbf{883}         & 0        & 1(-),3(+),4(+)   & 857.766        & 6.173         & 1(-),2(-),4(-)      & 870.133           & 2.753        & 1(-),2(-),3(+)     \\
            &      &           & 0.001    & \textbf{546}    & 0          & 2(=),3(+),4(+)   & \textbf{546}         & 0        & 1(=),3(+),4(+)   & 531.433        & 5.613         & 1(-),2(-),4(-)      & 545.5             & 0.921        & 1(-),2(-),3(+)     \\
            &      & 1000000   & 0.1      & 875.966         & 2.469      & 2(-),3(+),4(+)   & \textbf{883}         & 0        & 1(+),3(+),4(+)   & 845.6          & 8.89          & 1(-),2(-),4(-)      & 865.9             & 3.515        & 1(-),2(-),3(+)     \\
            &      &           & 0.001    & 545.966         & 0.179      & 2(-),3(+),4(+)   & \textbf{546}         & 0        & 1(+),3(+),4(+)   & 523.233        & 6.338         & 1(-),2(-),4(-)      & 544.833           & 0.897        & 1(-),2(-),3(+)     \\
            &      & 500000    & 0.1      & 848.533         & 4.1        & 2(-),3(+),4(-)   & \textbf{882.766}     & 0.422    & 1(+),3(+),4(+)   & 821.733        & 10.168        & 1(-),2(-),4(-)      & 858.4             & 3.878        & 1(+),2(-),3(+)     \\
            &      &           & 0.001    & 543.1           & 1.738      & 2(-),3(+),4(-)   & \textbf{546}         & 0        & 1(+),3(+),4(+)   & 511.333        & 8.117         & 1(-),2(-),4(-)      & 543.366           & 1.471        & 1(+),2(-),3(+)     \\
            & 188  & 1500000   & 0.1      & 1234.066        & 2.128      & 2(-),3(+),4(+)   & \textbf{1243.233}    & 0.76     & 1(+),3(+),4(+)   & 1225           & 3.941         & 1(-),2(-),4(-)      & 1220.433          & 3.602        & 1(-),2(-),3(+)     \\
            &      &           & 0.001    & 941.366         & 0.572      & 2(-),3(+),4(+)   & \textbf{942.933}     & 0.359    & 1(+),3(+),4(+)   & 919.133        & 5.01          & 1(-),2(-),4(-)      & 925.133           & 3.116        & 1(-),2(-),3(+)     \\
            &      & 1000000   & 0.1      & 1213.96         & 3.281      & 2(-),3(+),4(-)   & \textbf{1243.23}     & 0.715    & 1(+),3(+),4(+)   & 1210.866       & 6.781         & 1(-),2(-),4(-)      & 1214.133          & 3.685        & 1(-),2(-),3(+)     \\
            &      &           & 0.001    & 934.9           & 1.795      & 2(-),3(+),4(+)   & \textbf{943}         & 0        & 1(+),3(+),4(+)   & 906            & 7.478         & 1(-),2(-),4(-)      & 922.633           & 4.094        & 1(-),2(-),3(+)     \\
            &      & 500000    & 0.1      & 1154.633        & 6.332      & 2(-),3(+),4(-)   & \textbf{1243.466}    & 0.618    & 1(+),3(+),4(+)   & 1179.3         & 8.509         & 1(-),2(-),4(-)      & 1200.766          & 5.308        & 1(+),2(-),3(+)     \\
            &      &           & 0.001    & 903.7           & 5.386      & 2(-),3(+),4(-)   & \textbf{942.933}     & 0.249    & 1(+),3(+),4(+)   & 876            & 12.492        & 1(-),2(-),4(-)      & 915.5             & 4.883        & 1(+),2(-),3(+)     \\ \midrule
ca-GrQc     & 64   & 1500000   & 0.1      & 1403.933        & 7.54       & 2(-),3(+),4(+)   & \textbf{1432.333}    & 1.534    & 1(+),3(+),4(+)   & 1305.666       & 20.426        & 1(-),2(-),4(-)      & 1395.9            & 11.527       & 1(-),2(-),3(+)     \\
 &      &           & 0.001    & 754.266         & 3.14       & 2(+),3(+),4(-)   & \textbf{756.933}     & 0.249    & 1(-),3(+),4(-)   & 724.7          & 9.212         & 1(-),2(-),4(-)      & 754.7             & 2.368        & 1(+),2(+),3(+)     \\
            &      & 1000000   & 0.1      & 1387.5          & 7.428      & 2(-),3(+),4(+)   & \textbf{1431.733}    & 2.644    & 1(+),3(+),4(+)   & 1289.6         & 19.338        & 1(-),2(-),4(-)      & 1386.26           & 12.465       & 1(-),2(-),3(+)     \\
            &      &           & 0.001    & 746.033         & 7.323      & 2(-),3(+),4(-)   & \textbf{756.966}     & 0.179    & 1(+),3(+),4(-)   & 717.066        & 13.132        & 1(-),2(-),4(-)      & 753.033           & 2.96         & 1(+),2(+),3(+)     \\
            &      & 500000    & 0.1      & 1332.466        & 12.831     & 2(-),3(+),4(-)   & \textbf{1427.633}    & 4.693    & 1(+),3(+),4(+)   & 1233.9         & 19.618        & 1(-),2(-),4(-)      & 1369.066          & 14.104       & 1(+),2(-),4(+)     \\
            &      &           & 0.001    & 733.2           & 7.93       & 2(-),3(+),4(-)   & \textbf{754.966}     & 4.118    & 1(+),3(+),4(+)   & 698.9          & 15.788        & 1(-),2(-),4(-)      & 748.133           & 7.214        & 1(+),2(-),3(+)     \\
            & 207  & 1500000   & 0.1      & 2516.2          & 13.929     & 2(-),3(+),4(-)   & \textbf{2694.6}      & 4.506    & 1(+),3(+),4(+)   & 2428.466       & 21.846        & 1(-),2(-),4(-)      & 2551.866          & 14.176       & 1(+),2(-),3(+)     \\
            &      &           & 0.001    & 1974.466        & 10.206     & 2(-),3(+),4(+)   & \textbf{2053.9}      & 3.279    & 1(+),3(+),4(+)   & 1837.066       & 23.589        & 1(-),2(-),4(-)      & 1968.566          & 16.111       & 1(-),2(-),3(+)     \\
            &      & 1000000   & 0.1      & 2434.9          & 13.55      & 2(-),3(+),4(+)   & \textbf{2691.466}    & 4.462    & 1(+),3(+),4(+)   & 2363.966       & 25.356        & 1(-),2(-),4(-)      & 2535.766          & 16.823       & 1(+),2(-),3(+)     \\
            &      &           & 0.001    & 1930.133        & 10.901     & 2(-),3(+),4(-)   & \textbf{2054.833}    & 3.652    & 1(+),3(+),4(+)   & 1793.833       & 27.668        & 1(-),2(-),4(-)      & 1958.7            & 18.018       & 1(-),2(-),3(+)     \\
            &      & 500000    & 0.1      & 2288.033        & 14.549     & 2(-),3(+),4(-)   & \textbf{2688}        & 6.957    & 1(+),3(+),4(+)   & 2230.6         & 26.925        & 1(-),2(-),4(-)      & 2487.6            & 23.171       & 1(-),2(-),3(+)     \\
            &      &           & 0.001    & 1821.933        & 13.985     & 2(-),3(+),4(-)   & \textbf{2049.1}      & 5.081    & 1(+),3(+),4(+)   & 1704.366       & 30.625        & 1(-),2(-),4(-)      & 1925.133          & 16.202       & 1(-),2(-),3(+)     \\
            & 415  & 1500000   & 0.1      & 3205.966        & 12.605     & 2(-),3(-),4(-)   & \textbf{3556.866}    & 3.518    & 1(+),3(+),4(+)   & 3270.966       & 23.345        & 1(+),2(-),4(-)      & 3296.6            & 14.63        & 1(-),2(-),3(+)     \\
            &      &           & 0.001    & 2822.733        & 13.053     & 2(-),3(+),4(-)   & \textbf{3078.9}      & 4.407    & 1(+),3(+),4(+)   & 2777.2         & 17.158        & 1(-),2(-),4(-)      & 2865.933          & 16.29        & 1(-),2(-),3(+)     \\
            &      & 1000000   & 0.1      & 3105            & 11.195     & 2(-),3(-),4(-)   & \textbf{3555.3}      & 4.267    & 1(+),3(+),4(+)   & 3179.033       & 22.15         & 1(+),2(-),4(-)      & 3264.433          & 14.718       & 1(-),2(-),3(+)     \\
            &      &           & 0.001    & 2734.76         & 13.934     & 2(-),3(+),4(-)   & \textbf{3076.2}      & 4.867    & 1(+),3(+),4(+)   & 2688.73        & 28.249        & 1(-),2(-),4(-)      & 2832.9            & 16.933       & 1(-),2(-),3(+)     \\
            &      & 500000    & 0.1      & 2921.366        & 18.076     & 2(-),3(-),4(-)   & \textbf{3548.8}      & 4.969    & 1(+),3(+),4(+)   & 3000.266       & 25.241        & 1(+),2(-),4(-)      & 3198.266          & 17.804       & 1(-),2(-),3(+)     \\
            &      &           & 0.001    & 2569.566        & 17.657     & 2(-),3(+),4(-)   & \textbf{3070.5}      & 6.206    & 1(+),3(+),4(+)   & 2525.533       & 30.721        & 1(-),2(-),4(-)      & 2774.433          & 17.44        & 1(-),2(-),3(+)     \\ \midrule
ca-CondaMat & 146  & 1500000   & 0.1      & 5588            & 47.265     & 2(-),3(+),4(-)   & \textbf{6790.966}    & 12.335   & 1(+),3(+),4(+)   & 5187.966       & 95.708        & 1(-),2(-),4(-)      & 6330.3            & 36.893       & 1(+),2(-),3(+)     \\
 &      &           & 0.001    & 4153.733        & 43.338     & 2(-),3(+),4(-)   & \textbf{4748.166}    & 7.585    & 1(+),3(+),4(+)   & 3802.733       & 75.95         & 1(-),2(-),4(-)      & 4531.83           & 33.061       & 1(+),2(-),3(+)     \\
            &      & 1000000   & 0.1      & 5500.73         & 47.67      & 2(-),3(+),4(-)   & \textbf{6771.366}    & 16.3     & 1(+),3(+),4(+)   & 4945.9         & 113.293       & 1(-),2(-),4(-)      & 6281              & 46.322       & 1(+),2(-),3(+)     \\
            &      &           & 0.001    & 3957.5          & 40.782     & 2(-),3(+),4(-)   & \textbf{4736.133}    & 10.375   & 1(+),3(+),4(+)   & 3709.06        & 96.656        & 1(-),2(-),4(-)      & 4496.7            & 28.765       & 1(+),2(-),3(+)     \\
            &      & 500000    & 0.1      & 4818.53         & 47.71      & 2(-),3(+),4(-)   & \textbf{6708.366}    & 27.316   & 1(+),3(+),4(+)   & 4685.333       & 114.8373      & 1(-),2(-),4(-)      & 6115.233          & 54.393       & 1(+),2(-),3(+)     \\
            &      &           & 0.001    & 3648.3          & 65.33      & 2(-),3(+),4(-)   & \textbf{4581.633}    & 36.88    & 1(+),3(+),4(+)   & 3469.766       & 73.953        & 1(-),2(-),4(-)      & 4395.566          & 43.076       & 1(+),2(-),3(+)     \\
            & 1068 & 1500000   & 0.1      & 11787.533       & 57.133     & 2(-),3(+),4(-)   & \textbf{16650.933}   & 14.163   & 1(+),3(+),4(+)   & 12284.866      & 142.248       & 1(-),2(-),4(-)      & 13394.133         & 75.448       & 1(+),2(-),3(+)     \\
            &      &           & 0.001    & 10893.9         & 61.683     & 2(-),3(+),4(-)   & \textbf{15217.3}     & 14.45    & 1(+),3(+),4(+)   & 10950.3        & 122.3         & 1(-),2(-),4(-)      & 12241.966         & 72.716       & 1(+),2(-),3(+)     \\
            &      & 1000000   & 0.1      & 11194.23        & 72.488     & 2(-),3(-),4(-)   & \textbf{16573.6}     & 19.608   & 1(+),3(+),4(+)   & 11623.93       & 119.467       & 1(+),2(-),4(-)      & 13164.2           & 60.07        & 1(+),2(-),3(+)     \\
            &      &           & 0.001    & 10364.53        & 58.95      & 2(-),3(+),4(-)   & \textbf{15145.666}   & 14.485   & 1(+),3(+),4(+)   & 9987.13        & 123.936       & 1(-),2(-),4(-)      & 12040.166         & 99.78        & 1(+),2(-),3(+)     \\
            &      & 500000    & 0.1      & 9474.733        & 111.851    & 2(-),3(-),4(-)   & \textbf{16368.6}     & 27.005   & 1(+),3(+),4(+)   & 10729.133      & 180.573       & 1(+),2(-),4(-)      & 12708.533         & 82.447       & 1(+),2(-),3(+)     \\
            &      &           & 0.001    & 9344.733        & 84.931     & 2(-),3(-),4(-)   & \textbf{14947.6}     & 22.553   & 1(+),3(+),4(+)   & 9502.2         & 142.003       & 1(+),2(-),4(-)      & 11581.066         & 84.55        & 1(+),2(-),3(+)     \\
            & 2136 & 1500000   & 0.1      & 12749.533       & 93.378     & 2(-),3(-),4(-)   & \textbf{20078.833}   & 11.066   & 1(+),3(+),4(+)   & 16361.766      & 67.76         & 1(+),2(-),4(-)      & 16243.166         & 68.601       & 1(+),2(-),3(+)     \\
            &      &           & 0.001    & 12730.3         & 91.025     & 2(-),3(-),4(-)   & \textbf{19327.433}   & 11.221   & 1(+),3(+),4(+)   & 15224.733      & 66.313        & 1(+),2(-),4(-)      & 15402.133         & 96.35        & 1(+),2(-),3(+)     \\
            &      & 1000000   & 0.1      & 11520.966       & 129.843    & 2(-),3(-),4(-)   & \textbf{20016.2}     & 16.172   & 1(+),3(+),4(+)   & 15696.5        & 116.216       & 1(+),2(-),4(-)      & 16020.633         & 88.739       & 1(+),2(-),3(+)     \\
            &      &           & 0.001    & 11500.633       & 114.875    & 2(-),3(-),4(-)   & \textbf{19245.233}   & 15.532   & 1(+),3(+),4(+)   & 14544.63       & 98.014        & 1(+),2(-),4(-)      & 15155.4           & 74.248       & 1(+),2(-),3(+)     \\
            &      & 500000    & 0.1      & 9488.333        & 90.631     & 2(-),3(-),4(-)   & \textbf{19822.566}   & 20.619   & 1(+),3(+),4(+)   & 14592.466      & 104.587       & 1(+),2(-),4(-)      & 14734.2           & 230.948      & 1(+),2(-),3(+)     \\
            &      &           & 0.001    & 9483.6          & 88.03      & 2(-),3(-),4(-)   & \textbf{19030.533}   & 21.451   & 1(+),3(+),4(+)   & 13456.133      & 89.165        & 1(+),2(-),4(-)      & 14399.233         & 152.192      & 1(+),2(-),3(+)     \\ \bottomrule
\end{tabular}
}
\end{table}

\begin{table}[t]
\caption{Results for Maximum coverage problem with IID weights where the evaluation is based on Chernoff Bound}
\label{table:iid_chef}
\resizebox{\textwidth}{0.5\textwidth}{
\begin{tabular}{@{}llllllllllllllll@{}}
\toprule
            &      &           &          & \multicolumn{3}{l}{\gs   (5)} & \multicolumn{3}{l}{\sg (6)}   & \multicolumn{3}{l}{$\nsga_{20}$ (7)}    & \multicolumn{3}{l}{$\nsga_{100}$ (8)} \\ \midrule
Graph       & $B$  & $t_{max}$ & $\alpha$ & Mean           & std       & stat            & Mean               & std    & stat           & Mean         & std     & stat           & Mean       & std     & stat           \\
ca-CSphd    & 43   & 1500000   & 0.1      & \textbf{478}   & 0         & 6(=),7(+),8(+)  & \textbf{478}       & 0      & 5(=),7(+),8(+) & 468.333      & 4.853   & 5(-),6(-),8(-) & 477.766    & 0.495   & 5(-),6(-),7(+) \\
            &      &           & 0.001    & \textbf{413}   & 0         & 6(=),7(+),8(+)  & \textbf{413}       & 0      & 5(=),7(+),8(+) & 405.4        & 3.903   & 5(-),6(-),8(-) & 412.9      & 0.3     & 5(-),6(-),7(+) \\
            &      & 1000000   & 0.1      & \textbf{478}   & 0         & 6(=),7(+),8(+)  & \textbf{478}       & 0      & 5(=),7(+),8(+) & \textbf{463} & 5.403   & 5(-),6(-),8(-) & 477.7      & 0.781   & 5(-),6(-),7(+) \\
            &      &           & 0.001    & \textbf{413}   & 0         & 6(=),7(+),8(+)  & \textbf{413}       & 0      & 5(=),7(+),8(+) & 402.6        & 4.24    & 5(-),6(-),8(-) & 412.633    & 0.481   & 5(-),6(-),7(+) \\
            &      & 500000    & 0.1      & 477.966        & 0.179     & 6(-),7(+),8(+)  & \textbf{477.966}   & 0.179  & 5(+),7(+),8(+) & 453.566      & 7.269   & 5(-),6(-),8(-) & 477.166    & 0.968   & 5(-),6(-),7(+) \\
            &      &           & 0.001    & 412.9          & 0.3       & 6(-),7(+),8(+)  & \textbf{413}       & 0      & 5(+),7(+),8(+) & \textbf{396} & 5.196   & 5(-),6(-),8(-) & 412.066    & 0.771   & 5(-),6(-),7(+) \\
            & 94   & 1500000   & 0.1      & 817.7          & 0.525     & 6(-),7(+),8(+)  & \textbf{818}       & 0      & 5(+),7(+),8(+) & 796.7        & 5.484   & 5(-),6(-),8(-) & 808.933    & 3.14    & 5(-),6(-),7(+) \\
            &      &           & 0.001    & 749.633        & 0         & 6(-),7(+),8(+)  & \textbf{750}       & 0      & 5(+),7(+),8(+) & 729.066      & 5.938   & 5(-),6(-),8(-) & 743        & 2.081   & 5(-),6(-),7(+) \\
            &      & 1000000   & 0.1      & 817.5          & 0.67      & 6(-),7(+),8(+)  & \textbf{817.966}   & 0.179  & 5(+),7(+),8(+) & 785.766      & 7.218   & 5(-),6(-),8(-) & 806.9      & 3.703   & 5(-),6(-),7(+) \\
            &      &           & 0.001    & 749.633        & 0.546     & 6(-),7(+),8(+)  & \textbf{750}       & 0      & 5(+),7(+),8(+) & 718.466      & 7.658   & 5(-),6(-),8(-) & 740.5      & 2.86    & 5(-),6(-),7(+) \\
            &      & 500000    & 0.1      & 809.8          & 1.956     & 6(-),7(+),8(+)  & \textbf{818}       & 0      & 5(+),7(+),8(+) & 764.733      & 9.337   & 5(-),6(-),8(-) & 800.766    & 3.48    & 5(-),6(-),7(+) \\
            &      &           & 0.001    & 744.633        & 2.057     & 6(-),7(+),8(+)  & \textbf{749.96}    & 0.179  & 5(+),7(+),8(+) & \textbf{695} & 9.855   & 5(-),6(-),8(-) & 735.666    & 4.307   & 5(-),6(-),7(+) \\
            & 188  & 1500000   & 0.1      & 1181.733       & 2.542     & 6(-),7(+),8(+)  & \textbf{1192.466}  & 0.618  & 5(+),7(+),8(+) & 1166.966     & 5.003   & 5(-),6(-),8(-) & 1166.966   & 3.772   & 5(-),6(-),7(+) \\
            &      &           & 0.001    & 1120.6         & 1.89      & 6(-),7(+),8(+)  & \textbf{1128}      & 0      & 5(+),7(+),8(+) & 1103.5       & 4.595   & 5(-),6(-),8(-) & 1105.8     & 3.572   & 5(-),6(-),7(+) \\
            &      & 1000000   & 0.1      & 1181.233       & 2.216     & 6(-),7(+),8(+)  & \textbf{1192.3}    & 0.69   & 5(+),7(+),8(+) & 1153.766     & 5.696   & 5(-),6(-),8(-) & 1160.033   & 3.745   & 5(-),6(-),7(+) \\
            &      &           & 0.001    & 1120.633       & 1.957     & 6(-),7(+),8(+)  & \textbf{1127.966}  & 0.179  & 5(+),7(+),8(+) & 1090.766     & 6.907   & 5(-),6(-),8(-) & 1099.6     & 2.961   & 5(-),6(-),7(+) \\
            &      & 500000    & 0.1      & 1143.833       & 5.865     & 6(-),7(+),8(-)  & \textbf{1192.366}  & 0.546  & 5(+),7(+),8(+) & 1122.1       & 8.904   & 5(-),6(-),8(-) & 1147.166   & 4.993   & 5(+),6(-),7(+) \\
            &      &           & 0.001    & 1089.333       & 4.101     & 6(-),7(+),8(+)  & \textbf{1127.9}    & 0.3    & 5(+),7(+),8(+) & 1057.266     & 10.327  & 5(-),6(-),8(-) & 1089.166   & 4.719   & 5(-),6(-),7(+) \\ \midrule
ca-GrQc     & 64   & 1500000   & 0.1      & 1275.6         & 6.311     & 6(-),7(+),8(+)  & \textbf{1294.433}  & 2.076  & 5(+),7(+),8(+) & 1189.966     & 17.809  & 5(-),6(-),8(-) & 1265.633   & 6.695   & 5(-),6(-),7(+) \\
            &      &           & 0.001    & 1125.76        & 5.69      & 6(-),7(+),8(+)  & \textbf{1145.433}  & 0.955  & 5(+),7(+),8(+) & 1054.733     & 18.446  & 5(-),6(-),8(-) & 1125.2     & 7.409   & 5(-),6(-),7(+) \\
            &      & 1000000   & 0.1      & 1275.233       & 7.548     & 6(-),7(+),8(+)  & \textbf{1294.1}    & 2.399  & 5(+),7(+),8(+) & 1168.6       & 16.318  & 5(-),6(-),8(-) & 1259.266   & 10.478  & 5(-),6(-),7(+) \\
            &      &           & 0.001    & 1127.3         & 7.528     & 6(-),7(+),8(+)  & \textbf{1144.866}  & 1.431  & 5(+),7(+),8(+) & 1031.1       & 19.291  & 5(-),6(-),8(-) & 1121.933   & 8.156   & 5(-),6(-),7(+) \\
            &      & 500000    & 0.1      & 1249.5         & 11.242    & 6(-),7(+),8(+)  & \textbf{1291.6}    & 4.095  & 5(+),7(+),8(+) & 1126.333     & 21.235  & 5(-),6(-),8(-) & 1246       & 13.147  & 5(-),6(-),7(+) \\
            &      &           & 0.001    & 1105.533       & 8.815     & 6(-),7(+),8(+)  & \textbf{1143.266}  & 2.112  & 5(+),7(+),8(+) & 992.333      & 19.777  & 5(-),6(-),8(-) & 1112.933   & 9.44    & 5(-),6(-),7(+) \\
            & 207  & 1500000   & 0.1      & 2426.533       & 10.704    & 6(-),7(+),8(-)  & \textbf{2585.5}    & 4.514  & 5(+),7(+),8(+) & 2321.133     & 29.9    & 5(-),6(-),8(-) & 2454.966   & 16.664  & 5(+),6(-),7(+) \\
            &      &           & 0.001    & 2316.666       & 9.133     & 6(-),7(+),8(-)  & \textbf{2452.633}  & 4.956  & 5(+),7(+),8(+) & 2205.166     & 19.834  & 5(-),6(-),8(-) & 2328.2     & 19.436  & 5(+),6(-),7(+) \\
            &      & 1000000   & 0.1      & 2425.333       & 13.196    & 6(-),7(+),8(-)  & \textbf{2582}      & 5.899  & 5(+),7(+),8(+) & 2258.666     & 25.941  & 5(-),6(-),8(-) & 2437.7     & 13.256  & 5(+),6(-),7(+) \\
            &      &           & 0.001    & 2315.7666      & 10.4      & 6(-),7(+),8(+)  & \textbf{2450.066}  & 5.585  & 5(+),7(+),8(+) & 2151.4       & 34.358  & 5(-),6(-),8(-) & 2313.766   & 17.392  & 5(-),6(-),7(+) \\
            &      & 500000    & 0.1      & 2294.366       & 13.212    & 6(-),7(+),8(-)  & \textbf{2577.566}  & 6.751  & 5(+),7(+),8(+) & 2133.1       & 32.639  & 5(-),6(-),8(-) & 2396.5     & 12.241  & 5(+),6(-),7(+) \\
            &      &           & 0.001    & 2194.233       & 13.142    & 6(-),7(+),8(+)  & \textbf{2446.233}  & 5.69   & 5(+),7(+),8(+) & 2028.766     & 34.202  & 5(-),6(-),8(-) & 2276.9     & 16.44   & 5(-),6(-),7(+) \\
            & 415  & 1500000   & 0.1      & 3142.9         & 10.746    & 6(-),7(-),8(-)  & \textbf{3479.133}  & 3.77   & 5(+),7(+),8(+) & 3182.9       & 20.115  & 5(+),6(-),8(-) & 3227.633   & 16.15   & 5(+),6(-),7(+) \\
            &      &           & 0.001    & 3069.366       & 15.047    & 6(-),7(-),8(-)  & \textbf{3394.733}  & 4.17   & 5(+),7(+),8(+) & 3080.033     & 17.564  & 5(+),6(-),8(-) & 3140.2     & 17.457  & 5(+),6(-),7(+) \\
            &      & 1000000   & 0.1      & 3133.366       & 12.768    & 6(-),7(+),8(-)  & \textbf{3477.833}  & 4.568  & 5(+),7(+),8(+) & 3098.233     & 27.465  & 5(-),6(-),8(-) & 3194.066   & 18.145  & 5(+),6(-),7(+) \\
            &      &           & 0.001    & 3055.366       & 13.496    & 6(-),7(+),8(-)  & \textbf{3391.766}  & 4.63   & 5(+),7(+),8(+) & 2999.766     & 35.903  & 5(-),6(-),8(-) & 3105.633   & 17.516  & 5(+),6(-),7(+) \\
            &      & 500000    & 0.1      & 2949.733       & 14.955    & 6(-),7(+),8(-)  & \textbf{3470.666}  & 4.763  & 5(+),7(+),8(+) & 2920.233     & 35.273  & 5(-),6(-),8(-) & 3127.566   & 16.562  & 5(+),6(-),7(+) \\
            &      &           & 0.001    & 2874.9         & 14.767    & 6(-),7(+),8(-)  & \textbf{3382.833}  & 4.442  & 5(+),7(+),8(+) & 2836         & 32.535  & 5(-),6(-),8(-) & 3039.433   & 17.562  & 5(+),6(-),7(+) \\ \midrule
ca-CondaMat & 146  & 1500000   & 0.1      & 5321.1333      & 40.291    & 6(-),7(+),8(-)  & \textbf{6424.2}    & 10.403 & 5(+),7(+),8(+) & 4931.833     & 82.145  & 5(-),6(-),8(-) & 6018.066   & 41.946  & 5(+),6(-),7(+) \\
            &      &           & 0.001    & 5027.166       & 49.31     & 6(-),7(+),8(-)  & \textbf{5994.833}  & 10.96  & 5(+),7(+),8(+) & 4622.866     & 96.07   & 5(-),6(-),8(-) & 5652.833   & 41.121  & 5(+),6(-),7(+) \\
            &      & 1000000   & 0.1      & 5059.6         & 39.678    & 6(-),7(+),8(-)  & \textbf{6397.966}  & 14.943 & 5(+),7(+),8(+) & 4755.133     & 73.411  & 5(-),6(-),8(-) & 5954.966   & 50.251  & 5(+),6(-),7(+) \\
            &      &           & 0.001    & 4784.766       & 54.93     & 6(-),7(+),8(-)  & \textbf{5979.9}    & 16.912 & 5(+),7(+),8(+) & 4441.5       & 126.327 & 5(-),6(-),8(-) & 5582.666   & 38.694  & 5(+),6(-),7(+) \\
            &      & 500000    & 0.1      & 4625.4         & 60.563    & 6(-),7(+),8(-)  & \textbf{6328.2}    & 31.971 & 5(+),7(+),8(+) & 4443.2       & 84.517  & 5(-),6(-),8(-) & 5787.266   & 63.911  & 5(+),6(-),7(+) \\
            &      &           & 0.001    & 4344.33        & 59.972    & 6(-),7(+),8(-)  & \textbf{5898.133}  & 22.47  & 5(+),7(+),8(+) & 4170         & 103.826 & 5(-),6(-),8(-) & 5437.7     & 43.076  & 5(+),6(-),7(+) \\
            & 1068 & 1500000   & 0.1      & 11632.833      & 52.573    & 6(-),7(-),8(-)  & \textbf{16650.933} & 14.163 & 5(+),7(+),8(+) & 12054.233    & 126.656 & 5(+),6(-),8(-) & 13206.26   & 65.605  & 5(+),6(-),7(+) \\
            &      &           & 0.001    & 11464.966      & 61.738    & 6(-),7(-),8(-)  & \textbf{15217.3}   & 14.45  & 5(+),7(+),8(+) & 11783.9      & 95.969  & 5(+),6(-),8(-) & 12974.86   & 85.587  & 5(+),6(-),7(+) \\
            &      & 1000000   & 0.1      & 11059.2        & 73.482    & 6(-),7(-),8(-)  & \textbf{16343.833} & 16.806 & 5(+),7(+),8(+) & 11441.2      & 145.704 & 5(+),6(-),8(-) & 12961.266  & 85.324  & 5(+),6(-),7(+) \\
            &      &           & 0.001    & 10914.833      & 76.11     & 6(-),7(-),8(-)  & \textbf{16052.866} & 19.687 & 5(+),7(+),8(+) & 11208.1      & 91.525  & 5(+),6(-),8(-) & 12779.033  & 59.812  & 5(+),6(-),7(+) \\
            &      & 500000    & 0.1      & 9482.966       & 92.698    & 6(-),7(-),8(-)  & \textbf{16129.133} & 27.284 & 5(+),7(+),8(+) & 10487.566    & 128.411 & 5(+),6(-),8(-) & 12489.3    & 88.034  & 5(+),6(-),7(+) \\
            &      &           & 0.001    & 9466.433       & 113.403   & 6(-),7(-),8(-)  & \textbf{15840.433} & 26.94  & 5(+),7(+),8(+) & 10254.133    & 130.19  & 5(+),6(-),8(-) & 12309.4    & 81.884  & 5(+),6(-),7(+) \\
            & 2136 & 1500000   & 0.1      & 12719.533      & 106.011   & 6(-),7(-),8(-)  & \textbf{19955.3}   & 10.312 & 5(+),7(+),8(+) & 16187.466    & 93.583  & 5(+),6(-),8(-) & 16130.1    & 80.711  & 5(+),6(-),7(+) \\
            &      &           & 0.001    & 12701.7        & 90.339    & 6(-),7(-),8(-)  & \textbf{19813.8}   & 14.041 & 5(+),7(+),8(+) & 16006.8      & 109.226 & 5(+),6(-),8(-) & 15964.933  & 75.003  & 5(+),6(-),7(+) \\
            &      & 1000000   & 0.1      & 11481.066      & 98.619    & 6(-),7(-),8(-)  & \textbf{19891.466} & 13.197 & 5(+),7(+),8(+) & 15519.3      & 102.403 & 5(+),6(-),8(-) & 15812.433  & 93.623  & 5(+),6(-),7(+) \\
            &      &           & 0.001    & 11458.1        & 116.396   & 6(-),7(-),8(-)  & \textbf{19750.366} & 12.084 & 5(+),7(+),8(+) & 15323.266    & 103.988 & 5(+),6(-),8(-) & 15684.533  & 83.137  & 5(+),6(-),7(+) \\
            &      & 500000    & 0.1      & 9451.033       & 105.529   & 6(-),7(-),8(-)  & \textbf{19697.033} & 17.995 & 5(+),7(+),8(+) & 14435.233    & 101.01  & 5(+),6(-),8(-) & 14650.033  & 289.611 & 5(+),6(-),7(+) \\
            &      &           & 0.001    & 9460.666       & 116.783   & 6(-),7(-),8(-)  & \textbf{19540.266} & 18.77  & 5(+),7(+),8(+) & 14218.8      & 129.106 & 5(+),6(-),8(-) & 14586.566  & 214.181 & 5(+),6(-),7(+) \\ \bottomrule
\end{tabular}
}
\end{table}

\begin{table}[t]
\caption{Results for Maximum coverage problem with uniform weights with same dispersion where the evaluation is based on Chebyshev's equality}
\label{table:uwd_cheb}
\resizebox{\textwidth}{0.5\textwidth}{
\begin{tabular}{@{}llllllllllllllll@{}}
\toprule
             &       &           &          & \multicolumn{3}{l}{\gs   (9)}           & \multicolumn{3}{l}{\sg (10)}               & \multicolumn{3}{l}{$\nsga_{20}$ (11)}    & \multicolumn{3}{l}{$\nsga_{100}$ (12)}       \\ \midrule
Graph        & $B$   & $t_{max}$ & $\alpha$ & Mean         & std    & stat              & Mean              & std   & stat             & Mean         & std    & stat             & Mean             & std    & stat             \\
ca-CSphd     & 43    & 1500000   & 0.1      & \textbf{38}  & 0      & 10(=),11(=),12(=) & \textbf{38}       & 0     & 9(=),11(=),12(=) & \textbf{38}  & 0      & 9(=),10(=),12(=) & \textbf{38}      & 0      & 9(=),10(=),11(=) \\
             &       &           & 0.001    & \textbf{22}  & 0      & 10(=),11(=),12(=) & \textbf{22}       & 0     & 9(=),11(=),12(=) & \textbf{22}  & 0      & 9(=),10(=),12(=) & \textbf{22}      & 0      & 9(=),10(=),11(=) \\
             &       & 1000000   & 0.1      & \textbf{38}  & 0      & 10(=),11(=),12(=) & \textbf{38}       & 0     & 9(=),11(=),12(=) & \textbf{38}  & 0      & 9(=),10(=),12(=) & \textbf{38}      & 0      & 9(=),10(=),11(=) \\
             &       &           & 0.001    & \textbf{22}  & 0      & 10(=),11(=),12(=) & \textbf{22}       & 0     & 9(=),11(=),12(=) & \textbf{22}  & 0      & 9(=),10(=),12(=) & \textbf{22}      & 0      & 9(=),10(=),11(=) \\
             &       & 500000    & 0.1      & \textbf{38}  & 0      & 10(=),11(=),12(=) & \textbf{38}       & 0     & 9(=),11(=),12(=) & \textbf{38}  & 0      & 9(=),10(=),12(=) & \textbf{38}      & 0      & 9(=),10(=),11(=) \\
             &       &           & 0.001    & \textbf{22}  & 0      & 10(=),11(=),12(=) & \textbf{22}       & 0     & 9(=),11(=),12(=) & \textbf{22}  & 0      & 9(=),10(=),12(=) & \textbf{22}      & 0      & 9(=),10(=),11(=) \\
             & 94    & 1500000   & 0.1      & \textbf{88}  & 0      & 10(=),11(=),12(=) & \textbf{88}       & 0     & 9(=),11(=),12(=) & 87.733       & 0.442  & 9(=),10(=),12(=) & \textbf{88}      & 0      & 9(=),10(=),11(=) \\
             &       &           & 0.001    & \textbf{65}  & 0      & 10(=),11(=),12(=) & \textbf{65}       & 0     & 9(=),11(=),12(=) & \textbf{65}  & 0      & 9(=),10(=),12(=) & \textbf{65}      & 0      & 9(=),10(=),11(=) \\
             &       & 1000000   & 0.1      & \textbf{88}  & 0      & 10(=),11(=),12(=) & \textbf{88}       & 0     & 9(=),11(=),12(=) & 87.7         & 0.458  & 9(=),10(=),12(=) & \textbf{88}      & 0      & 9(=),10(=),11(=) \\
             &       &           & 0.001    & \textbf{65}  & 0      & 10(=),11(=),12(=) & \textbf{65}       & 0     & 9(=),11(=),12(=) & \textbf{65}  & 0      & 9(=),10(=),12(=) & \textbf{65}      & 0      & 9(=),10(=),11(=) \\
             &       & 500000    & 0.1      & \textbf{88}  & 0      & 10(-),11(=),12(=) & \textbf{88}       & 0     & 9(+),11(+),12(+) & 87.633       & 5.467  & 9(=),10(-),12(=) & 87.966           & 0.179  & 9(=),10(-),11(=) \\
             &       &           & 0.001    & \textbf{65}  & 0      & 10(=),11(=),12(=) & \textbf{65}       & 0     & 9(=),11(=),12(=) & \textbf{65}  & 0      & 9(=),10(=),12(=) & \textbf{65}      & 0      & 9(=),10(=),11(=) \\
             & 188   & 1500000   & 0.1      & \textbf{175} & 0      & 10(=),11(+),12(=) & \textbf{175}      & 0     & 9(=),11(+),12(=) & 172.733      & 0.928  & 9(-),10(-),12(-) & \textbf{175}     & 0      & 9(=),10(=),11(+) \\
             &       &           & 0.001    & \textbf{137} & 0      & 10(=),11(=),12(=) & \textbf{137}      & 0     & 9(=),11(=),12(=) & \textbf{137} & 0      & 9(=),10(=),12(=) & \textbf{137}     & 0      & 9(=),10(=),11(=) \\
             &       & 1000000   & 0.1      & \textbf{175} & 0      & 10(=),11(+),12(=) & \textbf{175}      & 0     & 9(=),11(=),12(=) & 172.4        & 0.84   & 9(-),10(-),12(-) & \textbf{175}     & 0      & 9(=),10(=),11(+) \\
             &       &           & 0.001    & \textbf{137} & 0      & 10(=),11(=),12(=) & \textbf{137}      & 0     & 9(=),11(=),12(=) & \textbf{137} & 0      & 9(=),10(=),12(=) & \textbf{137}     & 0      & 9(=),10(=),11(=) \\
             &       & 500000    & 0.1      & 174.966      & 0.179  & 10(=),11(+),12(=) & \textbf{175}      & 0     & 9(=),11(=),12(=) & 171.466      & 0.884  & 9(-),10(-),12(-) & \textbf{175}     & 0      & 9(=),10(=),11(+) \\
             &       &           & 0.001    & \textbf{137} & 0      & 10(=),11(=),12(=) & \textbf{137}      & 0     & 9(=),11(=),12(=) & \textbf{137} & 0      & 9(=),10(=),12(=) & \textbf{137}     & 0      & 9(=),10(=),11(=) \\ \midrule
ca-GrQc      & 64    & 1500000   & 0.1      & 60.966       & 0.179  & 10(=),11(=),12(=) & \textbf{61}       & 0     & 9(=),11(=),12(=) & 60.933       & 0.249  & 9(=),10(=),12(=) & \textbf{61}      & 0      & 9(=),10(=),11(=) \\
             &       &           & 0.001    & \textbf{44}  & 0      & 10(=),11(=),12(=) & \textbf{44}       & 0     & 9(=),11(=),12(=) & \textbf{44}  & 0      & 9(=),10(=),12(=) & \textbf{44}      & 0      & 9(=),10(=),11(=) \\
             &       & 1000000   & 0.1      & 60.9         & 0.3    & 10(=),11(=),12(=) & \textbf{61}       & 0     & 9(=),11(=),12(=) & 60.766       & 0.422  & 9(=),10(=),12(=) & \textbf{61}      & 0      & 9(=),10(=),11(=) \\
             &       &           & 0.001    & \textbf{44}  & 0      & 10(=),11(=),12(=) & \textbf{44}       & 0     & 9(=),11(=),12(=) & \textbf{44}  & 0      & 9(=),10(=),12(=) & \textbf{44}      & 0      & 9(=),10(=),11(=) \\
             &       & 500000    & 0.1      & 60.566       & 0.667  & 10(=),11(=),12(=) & 60.933            & 0.249 & 9(=),11(=),12(=) & 60.433       & 0.76   & 9(=),10(=),12(=) & \textbf{61}      & 0      & 9(=),10(=),11(=) \\
             &       &           & 0.001    & 43.9         & 0.3    & 10(=),11(=),12(=) & \textbf{44}       & 0     & 9(=),11(=),12(=) & \textbf{44}  & 0      & 9(=),10(=),12(=) & \textbf{44}      & 0      & 9(=),10(=),11(=) \\
             & 207   & 1500000   & 0.1      & \textbf{199} & 0      & 10(=),11(=),12(=) & \textbf{199}      & 0     & 9(=),11(=),12(=) & 198          & 0.7745 & 9(=),10(=),12(=) & \textbf{199}     & 0      & 9(=),10(=),11(=) \\
             &       &           & 0.001    & 171.833      & 0.372  & 10(=),11(=),12(=) & \textbf{172}      & 0     & 9(=),11(=),12(=) & 171.766      & 0.495  & 9(=),10(=),12(=) & \textbf{172}     & 0      & 9(=),10(=),11(=) \\
             &       & 1000000   & 0.1      & \textbf{199} & 0      & 10(=),11(-),12(=) & \textbf{199}      & 0     & 9(=),11(+),12(=) & 197.766      & 0.882  & 9(-),10(-),12(-) & \textbf{199}     & 0      & 9(=),10(=),11(+) \\
             &       &           & 0.001    & 171.5        & 0.806  & 10(=),11(=),12(=) & \textbf{172}      & 0     & 9(=),11(=),12(=) & 171.466      & 1.175  & 9(=),10(=),12(=) & \textbf{172}     & 0      & 9(=),10(=),11(=) \\
             &       & 500000    & 0.1      & 198.766      & 0.422  & 10(=),11(=),12(=) & \textbf{199}      & 0     & 9(=),11(=),12(=) & 197.266      & 1.236  & 9(=),10(=),12(=) & 198.96           & 0.179  & 9(=),10(=),11(=) \\
             &       &           & 0.001    & 169.466      & 1.707  & 10(=),11(-),12(-) & 171.966           & 0.179 & 9(=),11(-),12(-) & 171.066      & 1.364  & 9(=),10(=),12(=) & \textbf{172}     & 0      & 9(+),10(+),11(=) \\
             & 415   & 1500000   & 0.1      & 398.266      & 0.442  & 10(=),11(+),12(=) & \textbf{399}      & 0     & 9(=),11(+),12(=) & 390.966      & 1.622  & 9(-),10(-),12(-) & 398.466          & 0.498  & 9(=),10(=),11(+) \\
             &       &           & 0.001    & 353.066      & 1.436  & 10(=),11(+),12(=) & \textbf{355}      & 0     & 9(=),11(+),12(=) & 346.366      & 2.676  & 9(-),10(-),12(-) & 354.533          & 0.498  & 9(=),10(=),11(+) \\
             &       & 1000000   & 0.1      & 397.933      & 0.442  & 10(=),11(+),12(=) & \textbf{399}      & 0     & 9(=),11(+),12(=) & 390.066      & 1.931  & 9(-),10(-),12(-) & 398.366          & 0.546  & 9(=),10(=),11(+) \\
             &       &           & 0.001    & 351.7        & 1.159  & 10(-),11(+),12(-) & \textbf{354.9}    & 0.3   & 9(+),11(+),12(=) & 345.733      & 2.379  & 9(-),10(-),12(-) & 354.5            & 0.562  & 9(+),10(=),11(+) \\
             &       & 500000    & 0.1      & 396.933      & 0.512  & 10(-),11(+),12(=) & \textbf{398.766}  & 0.422 & 9(+),11(+),12(=) & 388.333      & 2.102  & 9(-),10(-),12(-) & 397.866          & 0.6699 & 9(+),10(=),11(+) \\
             &       &           & 0.001    & 350          & 1.181  & 10(-),11(+),12(-) & \textbf{354.533}  & 0.669 & 9(+),11(+),12(-) & 344.066      & 2.644  & 9(-),10(-),12(-) & 354.333          & 0.596  & 9(+),10(+),11(+) \\
ca-CondaMat  & 146   & 1500000   & 0.1      & 141.266      & 0.679  & 10(=),11(=),12(=) & \textbf{141.8}    & 0.979 & 9(=),11(=),12(=) & 141.266      & 0.813  & 9(=),10(=),12(=) & \textbf{141.8}   & 0.979  & 9(=),10(=),11(=) \\ \midrule
\multicolumn{2}{l}{} &           & 0.001    & 122.9        & 3.014  & 10(=),11(-),12(-) & 125.933           & 0.249 & 9(=),11(-),12(-) & 125.7        & 0.458  & 9(+),10(+),12(=) & \textbf{126}     & 0      & 9(+),10(+),11(=) \\
             &       & 1000000   & 0.1      & 141.2        & 0.6    & 10(=),11(=),12(=) & \textbf{141.733}  & 0.963 & 9(=),11(=),12(=) & 141          & 0.632  & 9(=),10(=),12(=) & 141.533          & 0.884  & 9(=),10(=),11(=) \\
             &       &           & 0.001    & 122.23       & 3.051  & 10(=),11(-),12(-) & 125.9             & 0.3   & 9(=),11(-),12(-) & 125.366      & 1.048  & 9(+),10(+),12(=) & \textbf{126}     & 0      & 9(+),10(+),11(=) \\
             &       & 500000    & 0.1      & 141.133      & 0.498  & 10(=),11(=),12(=) & \textbf{141.8}    & 0.979 & 9(=),11(=),12(=) & 140.833      & 0.734  & 9(=),10(=),12(=) & 141.333          & 0.745  & 9(=),10(=),11(=) \\
             &       &           & 0.001    & 120.733      & 3.172  & 10(=),11(-),12(-) & 125.6             & 0.663 & 9(=),11(-),12(-) & 124.066      & 2.657  & 9(+),10(+),12(=) & \textbf{125.7}   & 0.458  & 9(+),10(+),11(=) \\
             & 1068  & 1500000   & 0.1      & 1037.266     & 1.093  & 10(-),11(+),12(=) & \textbf{1044.833} & 0.933 & 9(+),11(+),12(+) & 1015.333     & 4.706  & 9(-),10(-),12(-) & 1037.833         & 2.646  & 9(=),10(-),12(+) \\
             &       &           & 0.001    & 978.4        & 2.751  & 10(-),11(+),12(-) & 991.766           & 2.347 & 9(+),11(+),12(-) & 959.566      & 10.892 & 9(-),10(-),12(-) & \textbf{992.2}   & 3.664  & 9(+),10(+),11(+) \\
             &       & 1000000   & 0.1      & 1034.933     & 1.152  & 10(-),11(+),12(=) & \textbf{1044.133} & 1.231 & 9(+),11(+),12(+) & 1012.333     & 5.204  & 9(-),10(-),12(-) & 1036.833         & 2.956  & 9(=),10(-),12(+) \\
             &       &           & 0.001    & 975.1        & 2.3288 & 10(-),11(+),12(-) & 989.5             & 2.202 & 9(+),11(+),12(-) & 954.866      & 10.616 & 9(-),10(-),12(-) & \textbf{991.3}   & 3.377  & 9(+),10(+),11(+) \\
             &       & 500000    & 0.1      & 1030.833     & 1.293  & 10(-),11(+),12(+) & \textbf{1041.633} & 1.251 & 9(+),11(+),12(+) & 1008.633     & 6.441  & 9(-),10(-),12(-) & 1035.233         & 2.641  & 9(+),10(-),12(+) \\
             &       &           & 0.001    & 967.666      & 3.418  & 10(-),11(+),12(-) & 985.9             & 2.748 & 9(+),11(+),12(-) & 947.233      & 10.932 & 9(-),10(-),12(-) & \textbf{988.866} & 4.145  & 9(+),10(+),11(+) \\
             & 2136  & 1500000   & 0.1      & 2035.066     & 2.92   & 10(-),11(+),12(+) & \textbf{2071.066} & 1.412 & 9(+),11(+),12(+) & 1963.4       & 9.844  & 9(-),10(-),12(-) & 2025.6           & 5.689  & 9(+),10(-),11(+) \\
             &       &           & 0.001    & 1925.3       & 3.671  & 10(-),11(+),12(-) & \textbf{1972.433} & 3.402 & 9(+),11(+),12(+) & 1850.633     & 13.345 & 9(-),10(-),12(-) & 1942.566         & 6.189  & 9(+),10(-),11(+) \\
             &       & 1000000   & 0.1      & 2026.966     & 3.341  & 10(-),11(+),12(+) & \textbf{2068.033} & 1.905 & 9(+),11(+),12(+) & 1956.3       & 9.987  & 9(-),10(-),12(-) & 2022.6           & 6.58   & 9(+),10(-),11(+) \\
             &       &           & 0.001    & 1914.7       & 3.831  & 10(-),11(+),12(-) & \textbf{1969.433} & 3.666 & 9(+),11(+),12(+) & 1839.766     & 13.313 & 9(-),10(-),12(-) & 1939.833         & 7.55   & 9(+),10(-),11(+) \\
             &       & 500000    & 0.1      & 2009.366     & 4.214  & 10(-),11(+),12(-) & \textbf{2063.166} & 2.852 & 9(+),11(+),12(+) & 1941.3       & 11.346 & 9(-),10(-),12(-) & 2016.5           & 5.942  & 9(+),10(-),11(+) \\
             &       &           & 0.001    & 1893.5       & 4.055  & 10(-),11(+),12(-) & \textbf{1960}     & 3.705 & 9(+),11(+),12(+) & 1821         & 14.61  & 9(-),10(-),12(-) & 1931.866         & 6.443  & 9(+),10(-),11(+) \\ \bottomrule
\end{tabular}
}
\end{table}

\begin{table}[]
\caption{Results for Maximum coverage problem with uniform weights with same dispersion where the evaluation is based on Chernoff bound}
\label{table:uwd_chf}
\resizebox{\textwidth}{0.5\textwidth}{
\begin{tabular}{@{}llllllllllllllll@{}}
\toprule
            &      &           &          & \multicolumn{3}{l}{\gs (13)}          & \multicolumn{3}{l}{\sg (14)}                & \multicolumn{3}{l}{$\nsga_{20}$ (15)}     & \multicolumn{3}{l}{$\nsga_{100}$ (16)}     \\ \midrule
Graph       & $B$  & $t_{max}$ & $\alpha$ & Mean         & std    & stat              & Mean              & std   & stat              & Mean         & std    & stat              & Mean           & std   & stat              \\
ca-CSphd    & 43   & 1500000   & 0.1      & \textbf{36}  & 0      & 14(=),15(=),16(=) & \textbf{36}       & 0     & 13(=),15(=),16(=) & \textbf{36}  & 0      & 13(=),14(=),16(=) & \textbf{36}    & 0     & 13(=),14(=),15(=) \\
            &      &           & 0.001    & \textbf{33}  & 0      & 14(=),15(=),16(=) & \textbf{33}       & 0     & 13(=),15(=),16(=) & \textbf{33}  & 0      & 13(=),14(=),16(=) & \textbf{33}    & 0     & 13(=),14(=),15(=) \\
            &      & 1000000   & 0.1      & \textbf{36}  & 0      & 14(=),15(=),16(=) & \textbf{36}       & 0     & 13(=),15(=),16(=) & \textbf{36}  & 0      & 13(=),14(=),16(=) & \textbf{36}    & 0     & 13(=),14(=),15(=) \\
            &      &           & 0.001    & \textbf{33}  & 0      & 14(=),15(=),16(=) & \textbf{33}       & 0     & 13(=),15(=),16(=) & \textbf{33}  & 0      & 13(=),14(=),16(=) & \textbf{33}    & 0     & 13(=),14(=),15(=) \\
            &      & 500000    & 0.1      & \textbf{36}  & 0      & 14(=),15(=),16(=) & \textbf{36}       & 0     & 13(=),15(=),16(=) & \textbf{36}  & 0      & 13(=),14(=),16(=) & \textbf{36}    & 0     & 13(=),14(=),15(=) \\
            &      &           & 0.001    & \textbf{33}  & 0      & 14(=),15(=),16(=) & \textbf{33}       & 0     & 13(=),15(=),16(=) & \textbf{33}  & 0      & 13(=),14(=),16(=) & \textbf{33}    & 0     & 13(=),14(=),15(=) \\
            & 94   & 1500000   & 0.1      & \textbf{85}  & 0      & 14(=),15(=),16(=) & \textbf{85}       & 0     & 13(=),15(=),16(=) & \textbf{85}  & 0      & 13(=),14(=),16(=) & \textbf{85}    & 0     & 13(=),14(=),15(=) \\
            &      &           & 0.001    & \textbf{81}  & 0      & 14(=),15(=),16(=) & \textbf{81}       & 0     & 13(=),15(=),16(=) & \textbf{81}  & 0      & 13(=),14(=),16(=) & \textbf{81}    & 0     & 13(=),14(=),15(=) \\
            &      & 1000000   & 0.1      & \textbf{85}  & 0      & 14(=),15(=),16(=) & \textbf{85}       & 0     & 13(=),15(=),16(=) & \textbf{85}  & 0      & 13(=),14(=),16(=) & \textbf{85}    & 0     & 13(=),14(=),15(=) \\
            &      &           & 0.001    & \textbf{81}  & 0      & 14(=),15(=),16(=) & \textbf{81}       & 0     & 13(=),15(=),16(=) & \textbf{81}  & 0      & 13(=),14(=),16(=) & \textbf{81}    & 0     & 13(=),14(=),15(=) \\
            &      & 500000    & 0.1      & \textbf{85}  & 0      & 14(=),15(=),16(=) & \textbf{85}       & 0     & 13(=),15(=),16(=) & \textbf{85}  & 0      & 13(=),14(=),16(=) & \textbf{85}    & 0     & 13(=),14(=),15(=) \\
            &      &           & 0.001    & \textbf{81}  & 0      & 14(=),15(=),16(=) & \textbf{81}       & 0     & 13(=),15(=),16(=) & \textbf{81}  & 0      & 13(=),14(=),16(=) & \textbf{81}    & 0     & 13(=),14(=),15(=) \\
            & 188  & 1500000   & 0.1      & \textbf{171} & 0      & 14(=),15(+),16(=) & \textbf{171}      & 0     & 13(=),15(+),16(=) & 169.866      & 1.11   & 13(-),14(-),16(-) & \textbf{171}   & 0     & 13(=),14(=),15(+) \\
            &      &           & 0.001    & \textbf{166} & 0      & 14(=),15(+),16(=) & \textbf{166}      & 0     & 13(=),15(+),16(=) & 164.3        & 0.69   & 13(-),14(-),16(-) & \textbf{166}   & 0     & 13(=),14(=),15(+) \\
            &      & 1000000   & 0.1      & \textbf{171} & 0      & 14(=),15(=),16(=) & \textbf{171}      & 0     & 13(=),15(+),16(=) & 169.6        & 1.019  & 13(-),14(-),16(-) & \textbf{171}   & 0     & 13(=),14(=),15(+) \\
            &      &           & 0.001    & \textbf{166} & 0      & 14(=),15(+),16(=) & \textbf{166}      & 0     & 13(=),15(+),16(=) & 163.933      & 1.062  & 13(-),14(-),16(-) & \textbf{166}   & 0     & 13(=),14(=),15(+) \\
            &      & 500000    & 0.1      & \textbf{171} & 0      & 14(=),15(+),16(=) & \textbf{171}      & 0     & 13(=),15(+),16(=) & 168.566      & 1.054  & 13(-),14(-),16(-) & \textbf{171}   & 0     & 13(=),14(=),15(+) \\
            &      &           & 0.001    & 165.366      & 0.795  & 14(=),15(+),16(=) & \textbf{166}      & 0     & 13(=),15(+),16(=) & 163.2        & 1.301  & 13(-),14(-),16(-) & \textbf{166}   & 0     & 13(=),14(=),15(+) \\ \midrule
ca-GrQc     & 64   & 1500000   & 0.1      & 57.966       & 0.1795 & 14(=),15(=),16(=) & \textbf{58}       & 0     & 13(=),15(=),16(=) & \textbf{58}  & 0      & 13(=),14(=),16(=) & \textbf{58}    & 0     & 13(=),14(=),15(=) \\
            &      &           & 0.001    & \textbf{57}  & 0      & 14(=),15(=),16(=) & \textbf{57}       & 0     & 13(=),15(=),16(=) & \textbf{57}  & 0      & 13(=),14(=),16(=) & \textbf{57}    & 0     & 13(=),14(=),15(=) \\
            &      & 1000000   & 0.1      & 57.966       & 0.179  & 14(=),15(=),16(=) & \textbf{58}       & 0     & 13(=),15(=),16(=) & 57.966       & 0.179  & 13(=),14(=),16(=) & \textbf{58}    & 0     & 13(=),14(=),15(=) \\
            &      &           & 0.001    & \textbf{57}  & 0      & 14(=),15(=),16(=) & \textbf{57}       & 0     & 13(=),15(=),16(=) & \textbf{57}  & 0      & 13(=),14(=),16(=) & \textbf{57}    & 0     & 13(=),14(=),15(=) \\
            &      & 500000    & 0.1      & 57.733       & 0.442  & 14(=),15(=),16(=) & \textbf{58}       & 0     & 13(=),15(=),16(=) & 57.766       & 0.422  & 13(=),14(=),16(=) & \textbf{58}    & 0     & 13(=),14(=),15(=) \\
            &      &           & 0.001    & 56.766       & 0.667  & 14(=),15(=),16(=) & \textbf{57}       & 0     & 13(=),15(=),16(=) & \textbf{57}  & 0      & 13(=),14(=),16(=) & \textbf{57}    & 0     & 13(=),14(=),15(=) \\
            & 207  & 1500000   & 0.1      & \textbf{196} & 0      & 14(=),15(=),16(=) & \textbf{196}      & 0     & 13(=),15(=),16(=) & 194.466      & 1.024  & 13(=),14(=),16(=) & \textbf{196}   & 0     & 13(=),14(=),15(=) \\
            &      &           & 0.001    & \textbf{192} & 0      & 14(=),15(=),16(=) & \textbf{192}      & 0     & 13(=),15(=),16(=) & 191.5        & 1.024  & 13(=),14(=),16(=) & \textbf{192}   & 0     & 13(=),14(=),15(=) \\
            &      & 1000000   & 0.1      & \textbf{196} & 0      & 14(=),15(=),16(=) & \textbf{196}      & 0     & 13(=),15(=),16(=) & 194.266      & 0.963  & 13(=),14(=),16(=) & \textbf{196}   & 0     & 13(=),14(=),15(=) \\
            &      &           & 0.001    & \textbf{192} & 0      & 14(=),15(=),16(=) & \textbf{192}      & 0     & 13(=),15(=),16(=) & 191.3        & 1.037  & 13(=),14(=),16(=) & \textbf{192}   & 0     & 13(=),14(=),15(=) \\
            &      & 500000    & 0.1      & 195.833      & 0.372  & 14(=),15(+),16(=) & \textbf{196}      & 0     & 13(=),15(+),16(=) & 193.966      & 0.835  & 13(-),14(-),16(-) & \textbf{196}   & 0     & 13(=),14(=),15(+) \\
            &      &           & 0.001    & 191.966      & 0.179  & 14(=),15(+),16(=) & \textbf{192}      & 0     & 13(=),15(+),16(=) & 190.266      & 1.412  & 13(-),14(-),16(-) & \textbf{192}   & 0     & 13(=),14(=),15(+) \\
            & 415  & 1500000   & 0.1      & 394.066      & 0.771  & 14(=),15(+),16(=) & \textbf{395.333}  & 0.471 & 13(=),15(+),16(=) & 386.833      & 1.694  & 13(-),14(-),16(-) & 394.8          & 0.476 & 13(=),14(=),15(+) \\
            &      &           & 0.001    & 386.866      & 0.339  & 14(=),15(+),16(=) & \textbf{388.833}  & 0.372 & 13(=),15(+),16(=) & 379.966      & 2.575  & 13(-),14(-),16(-) & 387.533        & 0.845 & 13(=),14(=),15(+) \\
            &      & 1000000   & 0.1      & 393.333      & 0.442  & 14(=),15(+),16(=) & \textbf{395.166}  & 0.372 & 13(=),15(+),16(=) & 386.133      & 2.124  & 13(-),14(-),16(-) & 394.333        & 0.829 & 13(=),14(=),15(+) \\
            &      &           & 0.001    & 386.233      & 0.76   & 14(=),15(+),16(=) & \textbf{388.333}  & 0.869 & 13(=),15(+),16(=) & 378.8        & 2.508  & 13(-),14(-),16(-) & 387.366        & 0.752 & 13(=),14(=),15(+) \\
            &      & 500000    & 0.1      & 392.233      & 0.512  & 14(=),15(+),16(=) & \textbf{394.866}  & 0.426 & 13(=),15(+),16(=) & 383.633      & 2.272  & 13(-),14(-),16(-) & 393.8          & 0.945 & 13(=),14(=),15(+) \\
            &      &           & 0.001    & 384.7        & 0.525  & 14(-),15(+),16(-) & \textbf{387.566}  & 0.803 & 13(=),15(+),16(=) & 376.7        & 2.223  & 13(-),14(-),16(-) & 386.766        & 0.989 & 13(-),14(=),15(+) \\ \midrule
ca-CondaMat & 146  & 1500000   & 0.1      & \textbf{139} & 0      & 14(=),15(=),16(=) & \textbf{139}      & 0     & 13(=),15(=),16(=) & \textbf{139} & 0      & 13(=),14(=),16(=) & \textbf{139}   & 0     & 13(=),14(=),15(=) \\
            &      &           & 0.001    & 137.1        & 1.445  & 14(=),15(=),16(=) & 138.2             & 1.326 & 13(=),15(=),16(=) & 136.9        & 1.374  & 13(=),14(=),16(=) & \textbf{138.3} & 1.268 & 13(=),14(=),15(=) \\
            &      & 1000000   & 0.1      & \textbf{139} & 0      & 14(=),15(=),16(=) & \textbf{139}      & 0     & 13(=),15(=),16(=) & \textbf{139} & 0      & 13(=),14(=),16(=) & \textbf{139}   & 0     & 13(=),14(=),15(=) \\
            &      &           & 0.001    & 136.7        & 1.1268 & 14(=),15(=),16(=) & 137.8             & 1.469 & 13(=),15(=),16(=) & 136.5        & 1.118  & 13(=),14(=),16(=) & \textbf{138}   & 1.414 & 13(=),14(=),15(=) \\
            &      & 500000    & 0.1      & \textbf{139} & 0      & 14(=),15(=),16(=) & \textbf{139}      & 0     & 13(=),15(=),16(=) & 138.966      & 0.179  & 13(=),14(=),16(=) & \textbf{139}   & 0     & 13(=),14(=),15(=) \\
            &      &           & 0.001    & 136.4        & 1.019  & 14(=),15(=),16(=) & 136.8             & 1.326 & 13(=),15(=),16(=) & 136.266      & 0.928  & 13(=),14(=),16(=) & \textbf{137.1} & 1.445 & 13(=),14(=),15(=) \\
            & 1068 & 1500000   & 0.1      & 1031.266     & 1.59   & 14(-),15(+),16(=) & \textbf{1039.366} & 1.139 & 13(+),15(+),16(+) & 1008.4       & 5.505  & 13(-),14(-),16(-) & 1033.966       & 2.994 & 13(=),14(-),15(+) \\
            &      &           & 0.001    & 1022.533     & 1.726  & 14(-),15(+),16(-) & \textbf{1031.533} & 0.956 & 13(+),15(+),16(+) & 1001.133     & 5.754  & 13(-),14(-),16(-) & 1026.966       & 2.575 & 13(+),14(-),15(+) \\
            &      & 1000000   & 0.1      & 1029.066     & 1.31   & 14(-),15(+),16(-) & \textbf{1038.166} & 1.097 & 13(+),15(+),16(+) & 1005.833     & 6.044  & 13(-),14(-),16(-) & 1033.266       & 3.203 & 13(+),14(-),15(+) \\
            &      &           & 0.001    & 1019.466     & 1.783  & 14(-),15(+),16(-) & \textbf{1030.366} & 1.425 & 13(+),15(+),16(=) & 997.466      & 6.463  & 13(-),14(-),16(-) & 1026.066       & 2.249 & 13(+),14(=),15(+) \\
            &      & 500000    & 0.1      & 1024.633     & 1.87   & 14(-),15(+),16(-) & \textbf{1036.133} & 0.956 & 13(+),15(+),16(+) & 1000.7       & 7.299  & 13(-),14(-),16(-) & 1031.766       & 3.921 & 13(+),14(-),15(+) \\
            &      &           & 0.001    & 1014.1       & 2.211  & 14(-),15(+),16(-) & \textbf{1027.833} & 1.507 & 13(+),15(+),16(=) & 992.5        & 7.069  & 13(-),14(-),16(-) & 1024.866       & 2.459 & 13(+),14(=),15(+) \\
            & 2136 & 1500000   & 0.1      & 2023.533     & 2.704  & 14(-),15(+),16(+) & \textbf{2062.166} & 1.881 & 13(+),15(+),16(+) & 1949.366     & 9.064  & 13(-),14(-),16(-) & 2019.5         & 6.687 & 13(+),14(-),15(+) \\
            &      &           & 0.001    & 2006.533     & 2.376  & 14(-),15(+),16(+) & \textbf{2041.2}   & 2.072 & 13(+),15(+),16(+) & 1932.4       & 10.694 & 13(-),14(-),16(-) & 2003.3         & 5.502 & 13(+),14(-),15(+) \\
            &      & 1000000   & 0.1      & 2015.366     & 3.219  & 14(-),15(+),16(=) & \textbf{2059.433} & 1.994 & 13(+),15(+),16(+) & 1942.266     & 9.051  & 13(-),14(-),16(-) & 2014.9         & 5.497 & 13(+),14(-),15(+) \\
            &      &           & 0.001    & 1997.5       & 3.232  & 14(-),15(+),16(-) & \textbf{2046.233} & 1.977 & 13(+),15(+),16(+) & 1924.566     & 11.632 & 13(-),14(-),16(-) & 2000.533       & 6.173 & 13(+),14(-),15(+) \\
            &      & 500000    & 0.1      & 1995.8       & 3.187  & 14(-),15(+),16(-) & \textbf{2052.833} & 2.296 & 13(+),15(+),16(+) & 1929.9       & 10.077 & 13(-),14(-),16(-) & 2008.766       & 6.897 & 13(+),14(-),15(+) \\
            &      &           & 0.001    & 1977.633     & 2.857  & 14(-),15(+),16(-) & \textbf{2037.933} & 2.555 & 13(+),15(+),16(+) & 1909.133     & 12.164 & 13(-),14(-),16(-) & 1993.9         & 7.449 & 13(+),14(-),15(+) \\ \bottomrule
\end{tabular}
}
\end{table}

\end{document}